\documentclass{article}

\PassOptionsToPackage{numbers, compress}{natbib}

\usepackage[preprint]{neurips_2025}




\usepackage[utf8]{inputenc} 
\usepackage[T1]{fontenc}    
\usepackage[colorlinks=true]{hyperref}       
\usepackage{url}            
\usepackage{booktabs}       
\usepackage{amsfonts}       
\usepackage{nicefrac}       
\usepackage{microtype}      
\usepackage{xcolor}         

\usepackage{amsmath,amssymb,amsthm}

\renewcommand{\*}[1]{\mathcal{#1}}
\def\R{\mathbb{R}}
\def\E{\mathbb{E}}
\def\Eb{\mathbf{E}}
\def\X{\mathbf{X}}
\def\Y{\mathbf{Y}}
\def\A{\mathbf{A}}
\def\P{\mathbb{P}}
\def\SS{\boldsymbol{\Sigma}}
\def\MMu{\boldsymbol{\mu}}

\DeclareMathOperator{\CosSim}{\mathsf{ CosSim}}

\usepackage{mathtools}
\newtheorem{theorem}{Theorem}
\newtheorem{lem}{\protect\lemmaname}
\providecommand{\lemmaname}{Lemma}
\newtheorem{prop}{Proposition}

\newtheorem{assumption}{\protect\assumptionname\ignorespaces}
\providecommand{\assumptionname}{A}

\usepackage{enumitem}
\usepackage{multibib}
\newcites{app}{Appendix References}

\title{How high is `high'? Rethinking the roles of dimensionality in topological data analysis and manifold learning}

%

\author{%
  Hannah Sansford \\
  School of Mathematics\\
  University of Bristol, UK\\
   \And
     Nick Whiteley \\
  School of Mathematics\\
  University of Bristol, UK\\
     \And
     Patrick Rubin-Delanchy \\
  School of Mathematics\\
  University of Edinburgh, UK\\
}

\begin{document}
\maketitle


\begin{abstract}
We present a generalised Hanson-Wright inequality and use it to establish new statistical insights into the geometry of data point-clouds. In the setting of a general random function model of data, we clarify the roles played by three notions of dimensionality: \emph{ambient intrinsic dimension} $p_{\mathrm{int}}$, which measures total variability across orthogonal feature directions; \emph{correlation rank}, which measures functional complexity across samples; and \emph{latent intrinsic dimension}, which is the dimension of manifold structure hidden in data. Our analysis shows that in order for persistence diagrams to reveal latent homology and for manifold structure to emerge it is sufficient that $p_{\mathrm{int}}\gg \log n$, where $n$ is the sample size.  Informed by these theoretical perspectives, we revisit the ground-breaking neuroscience discovery of toroidal structure in grid-cell activity made by \citet{gardner2022toroidal}: our findings reveal, for the first time, evidence that this structure is in fact \emph{isometric} to physical space, meaning that grid cell activity conveys a geometrically faithful representation of the real world. 
\end{abstract}

\section{Introduction}\vspace{-0.1cm}
\paragraph{Data geometry in Machine Learning}
Consider a data set of $n$ samples and $p$ features, $\Y_1,\ldots,\Y_n\in\R^p$. A wide range of algorithms, models, supervised and unsupervised learning methods process such data by taking as inputs the pairwise Euclidean distances, dot-products or cosine similarities,
\vspace{-5pt}
\begin{align}\label{eq:pairwise}
\|\Y_i- \Y_j\|^2 = \|\Y_i\|^2 +  \|\Y_j\|^2 - 2 \Y_i \cdot \Y_j,\qquad \CosSim(\Y_i,\Y_j)=\frac{\Y_i \cdot \Y_j}{\|\Y_i\|\|\Y_j\|}.
\end{align}
The collection of these pairwise quantities across $i,j=1,\ldots,n$ conveys the geometric shape of $\mathcal{Y}_n\coloneqq\{\Y_1,\ldots,\Y_n\}$ regarded as a point-cloud in $\mathbb{R}^p$, and is the input to most dimension reduction and manifold learning methods, ranging from Classical Multidimensional Scaling \cite{torgerson1952multidimensional}, to Kernel PCA \cite{scholkopf1997kernel}, Isomap \cite{tenenbaum2000global}, $t$-SNE \cite{van2008visualizing} and UMAP \cite{mcinnes2018umap}, which are hugely popular for data visualisation, as well as spectral clustering \cite{von2007tutorial} (via a kernel function) and hierarchical, agglomerative clustering methods such as  UPGMA \cite{upgma,gray2023hierarchical}. The set of all pairwise distances is also the input to Persistent Homology techniques in topological data analysis (TDA) \cite{edelsbrunner2002topological,zomorodian2004computing,chazal2009proximity,chazal2021introduction}. Pairwise distances or dot-products are the input to classical supervised learning methods such as kernel methods and nearest-neighbour methods \cite{scholkopf2002learning}, and dot-products between training and test feature vectors define predictions in linear regression. 
\vspace{-0.2cm}
\paragraph{The disconnect between reality and existing statistical theory of data geometry.}  In this work we present new insights into statistical properties of the quantities in \eqref{eq:pairwise}, allowing us to explain how they convey latent geometry of the data-generating mechanism. In doing so we address two mismatches between the prominent statistical theory of data geometry and the realities of data analysis. Firstly, the popular mathematical view of a mean-zero, identity-covariance random vector is that, in high dimensions, its distribution is concentrated near the surface of a hypersphere, and that i.i.d. copies of such a vector are close to orthogonal with high probability, see   \cite{hall2005geometric}, the textbook of \citet{hastie01}[Sec. 2.5], and \cite{cai2013distributions} for refined analysis of similar i.i.d. setups. This rather degenerate limiting geometry does not seem expressive enough to accurately represent data in practice; it is incompatible with the widely accepted \emph{Manifold Hypothesis}  \cite{bengio2013representation, fefferman2016testing}, which is the premise of manifold learning and nonlinear dimension reduction, asserting that nominally high-dimensional data actually lie on a low-dimensional set embedded in high-dimensional space.  Secondly, much existing theory, e.g., \cite{hall2005geometric,ahn2007high,shen2016statistics,aoshima2018survey} focuses on the regime $p\gg n$ with which the phrase ``high-dimension low sample size asymptotics'' (HDLSS), and more broadly ``high-dimensional data'', are typically associated. This existing theory may therefore seem irrelevant in the many practical situations where $p\leq n$; indeed $p\gg n$ could be violated by simply collecting more data.  \vspace{-0.3cm}
\paragraph{Contributions.}
Our main contributions are as follows:
\begin{itemize}
[topsep=0pt,itemsep=-1ex,partopsep=1ex,parsep=1ex,leftmargin=0.3cm]
\item In section \ref{sec:HS} we present a generalised Hanson-Wright (GHW) inequality which quantifies the concentration behaviour of dot-products between possibly dependent random vectors with sub-Gaussian entries, and identify our first notion of dimension: \emph{ambient intrinsic dimension} $p_{\mathrm{int}}$.
\item In section \ref{sec:data_model} we identify a second notion of dimension: \emph{correlation rank}, and use our GHW inequality to explain how latent topological structure emerges from a random function model of data (from \cite{whiteley2022statistical}) as $p_{\mathrm{int}}$ grows;  we show under sub-Gaussian  assumptions that $p \gg n$ is not necessary; rather 
$p_{\mathrm{int}}\gg\log n$ suffices. Figure \ref{fig:sim_Example} gives a preview of the phenomenon. 
\item In section \ref{sec:tda_consistency} we consider consistency of TDA persistence diagrams in the regime $p_{\mathrm{int}}/\log n \to\infty$.
\item In section \ref{sec:isometry} we identify a third notion of dimension: \emph{latent intrinsic dimension}, and discuss how isometry between observed and latent geometry can be measured in practice.
\item In section \ref{sec:neuro} we revisit the neuroscience study of \cite{gardner2022toroidal}, and informed by our theory, extend their groundbreaking discoveries by  assessing evidence of isometry between grid cell activity and physical space.
\end{itemize}
\begin{figure}
\includegraphics[width=\textwidth]{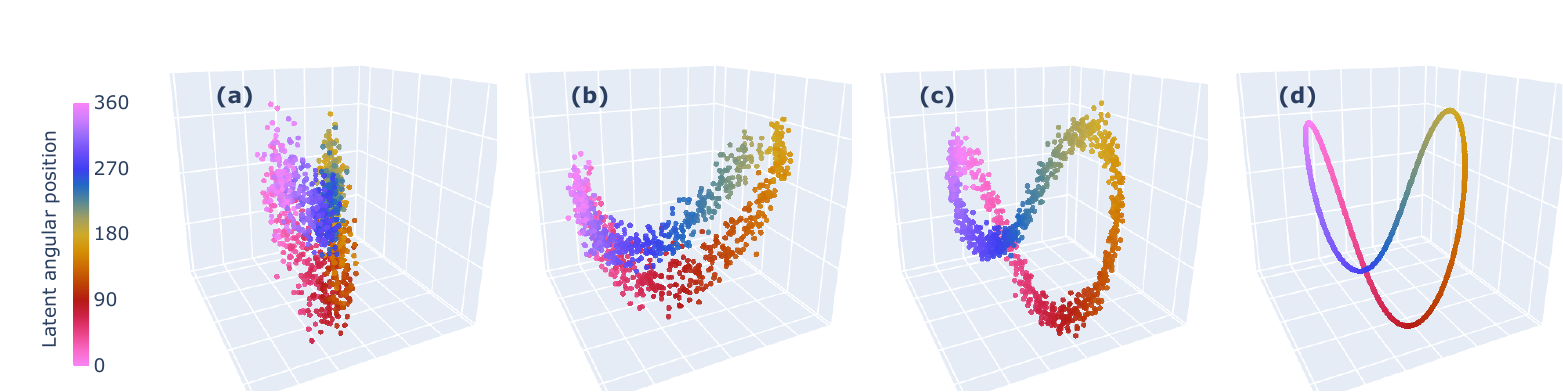}
\caption{\label{fig:sim_Example}Ambient intrinsic dimension $p_{\mathrm{int}}$,  correlation rank  $r$, and latent intrinsic dimension $d$ at play in simulation from a toy example of the random function model with $n=1000$. (a)-(c) show SVD visualisation of simulated data with respectively  $p_{\mathrm{int}}=3, 8, 20$; as $p_{\mathrm{int}}$ grows, the $d=1$-dimensional manifold $\mathcal{M}=\{\phi(z);z\in\mathcal{Z}\}$ shown in (d) emerges in a $r=3$-dimensional subspace.  In this example $\mathcal{M}$ is homeomorphic to the latent space $\mathcal{Z}$, which is a circle. See section \ref{sec:data_model} for details. }
\vspace{-15pt}
\end{figure}
Section \ref{sec:data_model} is directly influenced by geometric perspectives of \cite{whiteley2021matrix,whiteley2022statistical,gray2023hierarchical} and in particular the insightful statistical treatment of the manifold hypothesis \cite{whiteley2022statistical}, but our main theoretical results (theorems \ref{thm:HW_inequality}, \ref{thm:rand_func} and proposition \ref{prop:rand_func}) substantially extend and refine some of their results; we allow more general dependence structure across features and exploit sub-Gaussianity.   An extended discussion of related work is given in the supplementary material, along with all proofs and additional numerical experiments.\vspace{-0.3cm}
\paragraph{Notation and conventions.} We write $[n]\coloneq\{1,\ldots,n\}$ and $\ell_2$ for the set of square-summable real sequences $\{x=(x_1,x_2,\ldots):\|x\|\coloneqq(\sum_k|x_k|^2)^{1/2}<\infty\}$. For two nonnegative sequences $(a_n)_{n\geq 1}$, $(b_n)_{n\geq 1}$, $a_n\in\Omega(b_n)$ means $\liminf_{n\to\infty} a_n/b_n>0$. For two sequences of nonnegative random variables $(X_n)_{n\geq1}$, $(Y_n)_{n\geq 1}$, $X_n\in O_{\P}(Y_n)$ means that for any $\epsilon>0$ there exists $M$ and $n_0$ such that $\P(X_n/Y_n>M)\leq \epsilon$ for all $n\geq n_0$.    The sub-Gaussian norm of a random variable $X$ is $
\|X\|_{\psi_2} \coloneqq \sup_{q\geq 1} q^{-1/2}\E \left[|X|^q\right]^{1/p}$,
and $X$ is said to be sub-Gaussian if $\|X\|_{\psi_2}<\infty$. This condition can be understood as meaning that the tails of the distribution of $X$ decay at least as quickly as a Gaussian, but includes the case where $X$ is a discrete random variables taking only finitely many different values, or more generally the case where the support of the distribution of $X$ is bounded. The Frobenius and spectral matrix norms are respectively denoted $\|\cdot\|_{\mathrm{F}}$ and $\|\cdot\|$. $\mathbf{I}[\cdot]$ is the indicator function, $\mathbf{I}_p$ is the $p$-by-$p$ identity matrix.

\vspace{-0.3cm}
\section{A generalised Hanson-Wright inequality}\label{sec:HS}
The Hanson-Wright (HW) inequality is a concentration inequality for the quadratic form: $\X^\top \A \X$, where $\X$ is a vector of independent, sub-Gaussian random variables and $\A$ is a matrix \cite{hanson1971bound,wright1973bound,rudelson2013}, see supplementary material for more background. Our GHW inequality concerns $\X^\top \A \X^\prime$, where $\X,\X^\prime$ are allowed to be dependent, reducing to the standard HW inequality of \cite{rudelson2013} when $\X=\X^\prime$.  
\begin{theorem}
\label{thm:HW_inequality}Let $\X=(X_{1},\ldots,X_{p})$ and
$\X^{\prime}=(X_{1}^{\prime},\ldots,X_{p}^{\prime})$ be $\R^{p}$-valued
random vectors such that the pairs $(X_{j},X_{j}^{\prime})$, $j=1,\ldots,p$
are mutually independent, and $\E[X_{j}]=\E[X_{j}^{\prime}]=0$
and $\|X_{j}\|_{\psi_{2}}\vee\|X_{j}^{\prime}\|_{\psi_{2}}\leq K$
for all $j=1,\ldots,p$. Let $\A\in\mathbb{R}^{p\times p}$.
Then there is an absolute constant $c$ such that for any $t\geq0$, 
\[
\P\left(\left|\X^{\top}\A\X^{\prime}-\E\left[\X^{\top}\A\X^{\prime}\right]\right|>t\right)\leq2\exp\left[-c\min\left\{ \frac{t^{2}}{K^{4}\|\A\|_{\mathrm{F}}^{2}},\frac{t}{K^{2}\|\A\|}\right\} \right].
\]
\end{theorem}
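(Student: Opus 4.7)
The plan is to adapt the Rudelson--Vershynin proof of the classical Hanson--Wright inequality~\cite{rudelson2013} to the paired-dependent bilinear setting. First I would decompose $\X^\top \A \X' = D + O$ with diagonal part $D \coloneqq \sum_j A_{jj} X_j X'_j$ and off-diagonal part $O \coloneqq \sum_{j\neq k} A_{jk} X_j X'_k$. A key observation is that pair-independence across $j$ implies $X_j \perp X'_k$ whenever $j \neq k$, so $\E[X_j X'_k]=0$ and $O$ is already centred; all the centring of $\X^\top \A \X'$ is carried by $D$. A union bound then reduces the proof to tail bounds on $|D - \E[D]|$ and $|O|$ separately.

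The diagonal piece is straightforward. The summands $A_{jj}(X_j X'_j - \E[X_j X'_j])$ are independent (pair-independence across $j$), centred, and sub-exponential with Orlicz-$\psi_1$ norm at most $\|X_j\|_{\psi_2}\|X'_j\|_{\psi_2}\le K^2$ by Cauchy--Schwarz. Bernstein's inequality for sums of independent sub-exponentials, combined with $\sum_j A_{jj}^2 \le \|\A\|_{\mathrm F}^2$ and $\max_j |A_{jj}| \le \|\A\|$, yields a tail of the claimed Bernstein form.

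For the off-diagonal piece, my plan is the moment method. Expanding,
\[
\E[O^{2m}] = \sum_{(j_l, k_l)_{l=1}^{2m}} \Bigl(\prod_{l} A_{j_l k_l}\Bigr) \prod_{s} \E\!\left[X_s^{a_s}\, (X_s')^{b_s}\right],
\]
where $a_s, b_s$ count occurrences of index $s$ among the $j$- and $k$-indices respectively, and the product over $s$ factorises by mutual independence of the pairs. Each factor vanishes unless $a_s+b_s \neq 1$, since $X_s$ and $X'_s$ are each mean zero; and by Cauchy--Schwarz and sub-Gaussianity, $|\E[X_s^{a_s}(X_s')^{b_s}]| \le (CK)^{a_s+b_s}(a_s+b_s)^{(a_s+b_s)/2}$. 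These are precisely the moment inputs used in the Rudelson--Vershynin combinatorial analysis, so that argument carries through essentially verbatim to give $\E[|O|^q]^{1/q} \le CK^2(\sqrt{q}\,\|\A\|_{\mathrm F} + q\,\|\A\|)$, and Markov's inequality then supplies the Bernstein tail.

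The hard part will be $O$: within-pair dependence blocks the natural route of conditioning on $\X$ and invoking a Gaussian-chaos bound on $\X'$, because the conditional $\psi_2$-norm of $X'_k$ given $X_k$ need not be controlled by $K$. The moment route above avoids this obstruction, since it uses only the univariate mean-zero conditions plus Cauchy--Schwarz bounds on mixed moments, both of which survive arbitrary within-pair dependence.
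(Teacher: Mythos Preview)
Your decomposition into diagonal and off-diagonal parts, and your Bernstein treatment of the diagonal, match the paper exactly. The divergence is in the off-diagonal sum $S=\sum_{i\neq j}a_{ij}X_iX'_j$, where you and the paper take different routes.

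The paper follows Rudelson--Vershynin's decoupling argument essentially verbatim: write $S = 4\,\E_\delta[S_\delta]$ with $S_\delta = \sum_{j \in \Lambda_\delta^c} X'_j\bigl(\sum_{i \in \Lambda_\delta} a_{ij} X_i\bigr)$, then condition on $(X_i)_{i\in\Lambda_\delta}$. Your concern that within-pair dependence spoils conditioning is valid if one conditions on \emph{all} of $\X$, but the decoupling step sidesteps it entirely: since $\Lambda_\delta$ and $\Lambda_\delta^c$ are disjoint, pair-independence across coordinates makes $\{X'_j : j \in \Lambda_\delta^c\}$ independent of $\{X_i : i \in \Lambda_\delta\}$, so the conditional $\psi_2$-norms of the relevant $X'_j$ are still bounded by $K$. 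From there the sub-Gaussian MGF bound and Gaussian-comparison steps of \cite{rudelson2013} go through unchanged. So the route you dismissed as blocked is exactly the one the paper takes, and it works.

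Your moment-method alternative is plausible in outline---the vanishing condition ($a_s+b_s\neq 1$) and the moment-growth bound you state do reduce to those of the classical quadratic-form case under the substitution $c_s \coloneqq a_s+b_s$---but note that Rudelson--Vershynin do \emph{not} prove their theorem via a combinatorial moment expansion; they use decoupling plus MGF/Gaussian comparison. So ``the Rudelson--Vershynin combinatorial analysis'' is a misattribution, and to complete your version you would need to supply or cite a separate moment-based Hanson--Wright argument rather than claim it carries over verbatim from \cite{rudelson2013}. The paper's approach buys a near-verbatim reduction to existing literature; yours, if finished, would be more self-contained but requires the combinatorics to be spelled out.
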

\paragraph{Zero-mean i.i.d. vectors are close to orthogonal.}
As a first illustration of how theorem \ref{thm:HW_inequality} can be used to understand data geometry, we re-visit a conventional perspective on zero-mean, i.i.d. random vectors. Let $\X_1,\ldots,\X_n$ be i.i.d. copies of the random vector $\X$ in theorem \ref{thm:HW_inequality}  subject to the additional requirement that its elements satisfy $\E[|X_j|^2]=1$ for $j=1,\ldots,p$, let $\boldsymbol{\Sigma}\in\mathbb{R}^{p\times p}$ be positive semidefinite and define $\Y_i\coloneqq \boldsymbol{\Sigma}^{1/2}\X_i $. Then $\Y_1,\ldots,\Y_n$ are i.i.d., each have mean zero and covariance matrix $\SS$, and for $i\neq j$, $\E[\Y_i\cdot\Y_j]=0$. We define the \emph{ambient intrinsic dimension}: 
\begin{equation}    
p_\mathrm{int} \coloneqq \frac{\mathrm{tr}(\SS)}{ \|\SS\|}.
\end{equation}
We have $1\leq p_{\mathrm{int}} \leq \mathrm{rank}(\SS)\leq p$, with $p_{\mathrm{int}}=p$ when all $p$ eigenvalues of $\SS$ are equal.  Since $\mathrm{tr} (\SS)=\E[\|\Y_i-\E[\Y_i]\|^2]$, $p_{\mathrm{int}}$ can be understood as a normalised measure of total variability of $\Y_i$.
The quantity $\mathrm{tr}(\SS)/\|\SS\|$ was called \emph{intrinsic dimension} by \citet{tropp2015introduction} and \citet{vershynin2018high}, we include the prefix ``ambient'' to disambiguate between $p_\mathrm{int}$ and the notion of \emph{latent intrinsic dimension} defined in section \ref{sec:isometry}. 
\begin{prop}\label{prop:iid}
Let $\Y_1,\ldots,\Y_n$ be the i.i.d., zero-mean random vectors defined above. If $p_{\mathrm{int}}\in \Omega(\log n)$, then 
\begin{equation}\label{eq::iid}
\max_{i,j\in[n]}\left|\frac{ \mathbf{Y}_{i}\cdot\mathbf{Y}_{j}}{\mathrm{tr}(\SS)}-\mathbf{I}[i=j]\right|\in O_{\mathbb{P}}\left(\sqrt{\frac{\log n}{p_{\mathrm{int}}}}\right).
\end{equation}
\end{prop}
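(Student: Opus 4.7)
Since $\Y_i=\SS^{1/2}\X_i$, we have $\Y_i\cdot\Y_j=\X_i^\top\SS\X_j$, so the proposition reduces to a concentration statement for bilinear/quadratic forms in sub-Gaussian vectors, and the natural tool is Theorem~\ref{thm:HW_inequality} with $\A=\SS$, applied pairwise for each $(i,j)\in[n]^2$. When $i=j$, Theorem~\ref{thm:HW_inequality} (specialised to $\X=\X^\prime$, i.e.\ the classical Hanson--Wright case) gives concentration of $\X_i^\top\SS\X_i$ around $\E[\X_i^\top\SS\X_i]=\mathrm{tr}(\SS)$. When $i\neq j$, the vectors $\X_i$ and $\X_j$ are independent and each has independent, mean-zero, sub-Gaussian coordinates with $\|X_{i,k}\|_{\psi_2}\vee\|X_{j,k}\|_{\psi_2}\leq K$, so the pairs $(X_{i,k},X_{j,k})$ are mutually independent across $k$ and the hypotheses of Theorem~\ref{thm:HW_inequality} are met; moreover $\E[\X_i^\top\SS\X_j]=0$ by independence and mean-zero of entries.

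\textbf{Threshold and the binding branch.} The idea is to choose the deviation $t=M\,\mathrm{tr}(\SS)\sqrt{\log n/p_{\mathrm{int}}}$ for a constant $M$ to be fixed, which matches the target rate after dividing by $\mathrm{tr}(\SS)$. Using the standard bound $\|\SS\|_{\mathrm{F}}^2\leq\|\SS\|\,\mathrm{tr}(\SS)$ and the identity $\mathrm{tr}(\SS)/\|\SS\|=p_{\mathrm{int}}$, one computes
\[
\frac{t^2}{K^4\|\SS\|_{\mathrm{F}}^2}\;\geq\;\frac{M^2\log n}{K^4},\qquad \frac{t}{K^2\|\SS\|}\;=\;\frac{M\sqrt{p_{\mathrm{int}}\log n}}{K^2}.
\]
Under $p_{\mathrm{int}}\in\Omega(\log n)$, the first (sub-Gaussian) branch is the smaller for $n$ large, so Theorem~\ref{thm:HW_inequality} delivers a pairwise tail bound of the form $2\exp(-c^\prime M^2\log n)$ with $c^\prime$ depending only on $K$.

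\textbf{Union bound and conclusion.} A union bound over the $n^2$ pairs yields
\[
\P\!\left(\max_{i,j\in[n]}\left|\frac{\Y_i\cdot\Y_j}{\mathrm{tr}(\SS)}-\mathbf{I}[i=j]\right|>M\sqrt{\frac{\log n}{p_{\mathrm{int}}}}\right)\;\leq\;2n^{2-c^\prime M^2}.
\]
Given any $\epsilon>0$, picking $M=M(\epsilon)$ with $c^\prime M^2>2$ and making the right-hand side at most $\epsilon$ for all $n\geq n_0(\epsilon)$ is exactly the definition of $O_{\P}$ convergence at rate $\sqrt{\log n/p_{\mathrm{int}}}$ stated in the notation paragraph. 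The main subtlety, and the step I would focus on, is identifying which branch of the $\min$ in Theorem~\ref{thm:HW_inequality} binds for the chosen threshold: the requirement that the quadratic branch dominates is precisely what forces $p_{\mathrm{int}}\in\Omega(\log n)$, because otherwise the linear branch would be active and the $\log n$ factor incurred by the union bound could not be absorbed into the rate. Everything else is routine algebra on norms of $\SS$ together with the $i=j$/$i\neq j$ case split already built into the statement.
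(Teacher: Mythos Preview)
Your proposal is correct and follows essentially the same route as the paper: apply Theorem~\ref{thm:HW_inequality} with $\A=\SS$ to each pair $(i,j)$, use $\|\SS\|_{\mathrm{F}}^2\leq\|\SS\|\,\mathrm{tr}(\SS)$ together with $p_{\mathrm{int}}=\mathrm{tr}(\SS)/\|\SS\|$ to lower-bound the two branches of the $\min$, then union-bound over $n^2$ pairs and choose $M$ large. One small imprecision: your claim that the sub-Gaussian branch always binds and hence the tail is $2\exp(-c'M^2\log n)$ need not hold at the boundary $p_{\mathrm{int}}\asymp\log n$, where for large $M$ the sub-exponential branch $M\sqrt{p_{\mathrm{int}}\log n}/K^2\asymp M\log n$ can be the smaller one, giving an exponent linear in $M$; the paper writes the bound as $(\log n)(M/K^2)\min\{M/K^2,c_0\}$ to cover both cases, but either way $M$ can be taken large enough to absorb the $2\log n$ from the union bound, so your conclusion stands.
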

The proof of proposition  \ref{prop:iid} involves applying theorem \ref{thm:HW_inequality} with $\X=\X_i$, $\X^\prime = \X_j$, $\A=\SS$, so that $\X^\top \A \X=\Y_i\cdot\Y_j$, and taking a union bound over $i,j\in[n]$.
 Proposition \ref{prop:iid} tells us that if $p_{\mathrm{int}}\gg \log n$, then $\Y_1,\ldots,\Y_n$ tend to be uniformly close to orthogonal, and all have magnitudes uniformly close to $\mathrm{tr}(\SS)$. This geometric configuration is often referred to as ``high-dimensional'' behaviour of random vectors, but proposition \ref{prop:iid} shows that ``high-dimensional'' is perhaps a misnomer in this situation, since for the l.h.s. of \eqref{eq::iid} to be small with high probability, we only require $p_{\mathrm{int}}\gg \log n$. 
 

\vspace{-0.1cm}
\section{Topology and manifold structure in data emerges from  random functions }\label{sec:data_model}\vspace{-0.1cm}
\paragraph{Model definition and statistical properties.} In this section we define a general form of random function statistical model which is substantially more expressive than the i.i.d. setting of proposition \ref{prop:iid}. The setup of the model follows \cite{whiteley2022statistical} in part but not all; our model allows for more general dependence structure across features. Similary to \cite{whiteley2022statistical} though, it is not our aim to perform confirmatory statistical analysis or model fitting, rather the utility of the model is to help us understand performance of TDA and manifold learning.   The main ingredients of the model are:
\begin{itemize}
[topsep=0pt,itemsep=0ex,partopsep=1ex,parsep=1ex,leftmargin=0.3cm]
\item a metric space $(\*Z,d_{\*Z})$, where $\*Z$ is a set and $d_{\*Z}(\cdot,\cdot)$ is a distance function, and a  collection of points $z_1,\ldots,z_n\in\*Z$. We write $\mathcal{Z}_n\coloneqq\{z_1,\ldots,z_n\}$.
\item random, $\mathbb{R}$-valued functions, $X_1(\cdot),\ldots,X_p(\cdot)$, each with domain $\mathcal{Z}$, so for each $z\in\mathcal{Z}$ and $j\in[p]$, $X_j(z)$ is a random variable. We write in vector form $\X(z)\coloneqq(X_1(z),\ldots,X_p(z))$.
\item  positive semidefinite matrices $\{\SS(z)\in\R^{p\times p};z \in \*Z\}$ and vectors $\{\boldsymbol{\mu}(z)\in\mathbb{R}^p;z\in \*Z\}$;
\item  random `noise' vectors $\Eb_1,\ldots,\Eb_n\in\R^p$, which are independent of the $X_j$.
\end{itemize}
We then define, with $\sigma\geq 0$,
$$ \Y_i  \coloneqq \boldsymbol{\Sigma}^{1/2}(z_i)\X(z_i) + \boldsymbol{\mu}(z_i) +\sigma \Eb_i.
$$
We denote the `noise-free' component of the model $\Y^{\mathrm{nf}}(z)\coloneqq \boldsymbol{\Sigma}^{1/2}(z)\X(z) + \boldsymbol{\mu}(z) $, so that $\Y_i \equiv \Y^{\mathrm{nf}}(z_i)+\sigma\Eb_i$.  

\begin{assumption}\label{ass:X_j} The random functions $X_j$ are independent across $j$, and for every $j$ and $z$, $\E[X_j(z)]=0$, $\E[|X_j(z)|^2]=1$ and $\|X_j(z)\|_{\psi_2}\leq K$, for some $K<\infty$.
\end{assumption}
\begin{assumption}\label{ass:E}
The random vectors $\Eb_1,\ldots,\Eb_n$ are i.i.d., $\E[\Eb_i]=\mathbf{0}$, $ \E[\Eb_i\Eb_i^\top]=\mathbf{I}_p$, and the elements $E_i^{(j)}$, $j=1,\ldots,p$ of each $\Eb_i$ are independent and for all $j$, $\|E_i^{(j)}\|_{\psi_2}<K$ for some $K<\infty$. 
\end{assumption}
This model relaxes both the \emph{independence} and \emph{identical distribution} parts of the i.i.d. assumption in proposition \ref{prop:iid}: we allow that for any $j\in[p]$ and $z,z^\prime\in\*Z$, $X_j(z)$ and $X_j(z^\prime)$ may be dependent, hence $\Y_i$ and $\Y_j$ may be dependent; and \textbf{A\ref{ass:X_j}}  and \textbf{A\ref{ass:E}} imply that  $\Y_i$ is a random vector with mean $\boldsymbol{\mu}(z_i)$ and covariance matrix $\boldsymbol{\Sigma}(z_i)+\sigma^2 \mathbf{I}_p$. Theorem \ref{thm:rand_func} shows that despite this general dependence structure, dot products amongst vectors $\Y_1,\ldots,\Y_n$ are concentrated about their expectations in a manner analogous to proposition \ref{prop:iid}. Extending the notion of ambient intrinsic dimension to the present setting, we define:
$$p_{\mathrm{int}}^{(i)} \coloneqq \frac{\mathrm{tr}[\SS(z_i)]}{\|\SS(z_i)\|}.$$
\begin{theorem}\label{thm:rand_func} Assume that $\Y_1,\ldots,\Y_n$ follow the random function model, \textbf{A\ref{ass:X_j}}-\textbf{A\ref{ass:E}} hold, and $\min_{i\in[n]}p_{\mathrm{int}}^{(i)}\in\Omega(\log n)$ as $n\to\infty$. Then
$$
\max_{i,j\in[n]}\left|\frac{\Y_i\cdot \Y_j}{\E[\|\Y_i\|^2]^{1/2}\E[\|\Y_j\|^2]^{1/2}}-\frac{\E[\Y_i\cdot \Y_j]}{\E[\|\Y_i\|^2]^{1/2}\E[\|\Y_j\|^2]^{1/2}}\right| \in O_{\P}\left(\sqrt{\frac{\log n}{\min_{i\in[n]}p_{\mathrm{int}}^{(i)}}}\right), 
$$
as $n\to\infty$.
\end{theorem}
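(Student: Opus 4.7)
The plan is to expand $\Y_i\cdot\Y_j - \E[\Y_i\cdot\Y_j]$ into a constant number of zero-mean terms, bound each using theorem \ref{thm:HW_inequality} or a sub-Gaussian bound for linear forms, and close with a union bound over $i,j\in[n]$. Writing $\Y_i = \SS^{1/2}(z_i)\X(z_i) + \MMu(z_i) + \sigma\Eb_i$, the centred dot product splits into three types of contribution: (i) the bilinear quadratic form $\X(z_i)^\top\A_{ij}\X(z_j) - \E[\cdot]$ with $\A_{ij}\coloneqq\SS^{1/2}(z_i)\SS^{1/2}(z_j)$; (ii) the noise--noise form $\sigma^2(\Eb_i\cdot\Eb_j - \E[\cdot])$; and (iii) four linear cross terms such as $\MMu(z_i)^\top\SS^{1/2}(z_j)\X(z_j)$ and $\sigma\X(z_i)^\top\SS^{1/2}(z_i)\Eb_j$.

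For (i), the independence of the $X_k(\cdot)$ across $k$ in A\ref{ass:X_j} guarantees that the pairs $(X_k(z_i), X_k(z_j))$ are mutually independent across $k\in[p]$, so theorem \ref{thm:HW_inequality} applies directly. The key algebraic step uses $\mathrm{tr}(AB)\leq\sqrt{\mathrm{tr}(A^2)\mathrm{tr}(B^2)}$ together with $\mathrm{tr}(S^2)\leq\|S\|\mathrm{tr}(S)$ for PSD $S$ to obtain
\begin{equation*}
\|\A_{ij}\|_\mathrm{F}^2 = \mathrm{tr}(\SS(z_i)\SS(z_j)) \leq \sqrt{\|\SS(z_i)\|\,\mathrm{tr}(\SS(z_i))\,\|\SS(z_j)\|\,\mathrm{tr}(\SS(z_j))},
\end{equation*}
together with $\|\A_{ij}\|\leq\sqrt{\|\SS(z_i)\|\|\SS(z_j)\|}$. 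Setting $t=\epsilon\sqrt{\E[\|\Y_i\|^2]\E[\|\Y_j\|^2]}$ and using $\E[\|\Y_i\|^2]\geq\mathrm{tr}(\SS(z_i))$, both entries of the minimum inside the exponent in theorem \ref{thm:HW_inequality} are lower-bounded by $c\epsilon^2\sqrt{p_{\mathrm{int}}^{(i)}p_{\mathrm{int}}^{(j)}}/K^4 \geq c\epsilon^2\min_{k\in[n]}p_{\mathrm{int}}^{(k)}/K^4$; the choice $\epsilon = C\sqrt{\log n/\min_{k\in[n]}p_{\mathrm{int}}^{(k)}}$ then yields a failure probability $\lesssim n^{-cC^2}$.

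For (ii), applying theorem \ref{thm:HW_inequality} with $\A=\sigma^2\mathbf{I}_p$ and noting $\E[\|\Y_i\|^2]\geq\sigma^2 p \geq \sigma^2 p_{\mathrm{int}}^{(i)}$ gives a tail bound even stronger than the required rate. For each of the linear terms in (iii), condition on the vector appearing on one side and apply the standard sub-Gaussian bound $\|\sum_k a_k U_k\|_{\psi_2}\lesssim K\|a\|_2$; the denominator bounds $\E[\|\Y_i\|^2]\geq\|\MMu(z_i)\|^2$ and $\E[\|\Y_i\|^2]\geq\mathrm{tr}(\SS(z_i))$ absorb the mean-scale and $\X$-scale contributions respectively, producing again the $\min_{k\in[n]}p_{\mathrm{int}}^{(k)}$ rate. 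A union bound over the $\leq n^2$ index pairs and the fixed number of decomposed terms, with $C$ chosen large enough, completes the proof. The main obstacle is the careful bookkeeping to check that every piece inherits the correct rate under the common normalisation; the critical non-trivial step is the Frobenius bound on $\A_{ij}$, which delivers the $\sqrt{p_{\mathrm{int}}^{(i)}p_{\mathrm{int}}^{(j)}}$ factor responsible for the qualitative message that $\log n$ compared to $p_{\mathrm{int}}$ (not the ambient $p$) governs concentration.
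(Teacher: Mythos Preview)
Your proposal is correct and follows essentially the same route as the paper: the same decomposition of the centred dot product into the bilinear $\X(z_i)^\top\A_{ij}\X(z_j)$ term, the noise--noise term, and the linear cross terms; the same GHW application with the key Frobenius/spectral bounds on $\A_{ij}$ yielding the $\sqrt{p_{\mathrm{int}}^{(i)}p_{\mathrm{int}}^{(j)}}$ factor; and the same union-bound closure. Two minor slips worth tidying: the cross term $\sigma\Eb_i^\top\SS^{1/2}(z_j)\X(z_j)$ is handled in the paper by a direct GHW application rather than conditioning (your conditioning route would need an extra step to control the random conditional sub-Gaussian norm $\|\SS^{1/2}(z_j)\X(z_j)\|$), and because the linear term in the GHW exponent eventually governs the minimum under $\min_i p_{\mathrm{int}}^{(i)}\in\Omega(\log n)$, the tail is $n^{-cC}$ rather than $n^{-cC^2}$ --- neither affects the conclusion.
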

We shall next see that under additional but mild assumptions, the quantities $\E[\|\Y_i\|^2]$ and $\E[\Y_i \cdot \Y_j]$ 
have a rich geometric interpretation, which can be transferred to $\mathcal{Y}_n=\{\Y_1,\ldots,\Y_n\}$ by theorem \ref{thm:rand_func}.

\paragraph{Relationship between point-clouds $\mathcal{Y}_n$ and $\mathcal{M}_n$.} Consider the kernel function:
\begin{equation}\label{eq:kernel_defn}
(z,z^\prime)\mapsto \E[\Y^{\mathrm{nf}}(z)\cdot \Y^{\mathrm{nf}}(z^\prime)].
\end{equation}
and the assumptions:
\begin{assumption}\label{ass:MScontinuity}$z\mapsto \Y^{\mathrm{nf}}(z)$ is mean-square continuous, i.e., $z\to z^\prime$ implies $\E[\|\Y^{\mathrm{nf}}(z)-\Y^{\mathrm{nf}}(z^\prime)\|^2]\to 0$.
\end{assumption}
\begin{assumption}\label{ass:Z_compact}
The metric space $(\*Z,d_{\*Z})$ is compact.
\end{assumption}
Under \textbf{A\ref{ass:MScontinuity}} the kernel function in \eqref{eq:kernel_defn} is continuous, and if additionally \textbf{A\ref{ass:Z_compact}} holds,  for any finite Borel measure $\nu$ supported on $\*Z$, Mercer's theorem \citep{steinwart2008support}[Thm 4.49], tells us that the kernel function in \eqref{eq:kernel_defn} has the representation:
\vspace{-8pt}
\begin{equation}\label{eq:mercer}
\E[\Y^{\mathrm{nf}}(z)\cdot \Y^{\mathrm{nf}}(z^\prime)] = \phi(z)\cdot\phi(z^\prime)  = \sum_{k=1}^r \lambda_k u_k(z) u_k(z^\prime)
\end{equation}
where $\phi:\*Z\to \ell_2$ is conventionally called a \emph{feature map} and the $k$th element of the vector $\phi(z)$ is $\lambda_k^{1/2} u_k(z)$, where the $u_k$ are $L_2(\nu)$-orthonormal functions, the $\lambda_k$ are associated eigenvalues and $r\in\{1,2,\ldots\}\cup\{\infty\}$ is the largest $k$ such that $\lambda_k>0$. We call $r$ the  \emph{correlation rank}, which can be understood as a measure of functional complexity of the kernel function, and hence the correlation between $\Y^{\mathrm{nf}}(z)$ and $\Y^{\mathrm{nf}}(z^\prime)$:  $r$ counts the number of orthonormal functions needed to express the dot product $\phi(z)\cdot\phi(z^\prime)$ across all $z,z^\prime\mathcal{Z}$. Define:
$$
\mathcal{M} \coloneqq \{\phi(z);\,z\in\*Z\},\qquad \*M_n\coloneqq\{\phi(z_i);\,i\in [n]\}.
$$
It follows from \eqref{eq:mercer} and \textbf{A\ref{ass:E}} that $$
\E[\Y_i\cdot\Y_j] = \phi(z_i)\cdot \phi(z_j) +p\sigma^2 \mathbf{I}[i=j]. 
$$
Combined with theorem \ref{thm:rand_func}, this tells us that when $\min_i p_{\mathrm{int}}^{(i)}\gg n$,  the geometry of the point-cloud $\mathcal{Y}_n$ reflects that of $\*M_n$ up to some distortion depending on $\sigma$. The normalizing terms $\E[\|\Y_i\|^2]$ appearing in theorem \ref{thm:rand_func}  are typically unknown in practice; proposition \ref{prop:rand_func} explains the behaviour of $\Y_i\cdot\Y_j$ and $\CosSim(\Y_i,\Y_j)$ avoiding unknown normalization.
\begin{prop}\label{prop:rand_func} If \textbf{A\ref{ass:X_j}}-\textbf{A\ref{ass:Z_compact}} hold, then:
\begin{multline*}
\max_{i,j\in[n]}\left|\frac{\Y_i\cdot \Y_j}{p}-\frac{\phi(z_i)\cdot\phi(z_j)}{p}-\sigma^2\mathbf{I}[i=j]\right|\\ \in O_{\P}\left(\left[\max_{i\in[n]}\frac{\mathrm{tr}[\SS(z_i)]+\|\boldsymbol{\mu}(z_i)\|^2}{p}+\sigma^2\right]\sqrt{\frac{ \log n}{\min_{i\in[n]}p_{\mathrm{int}}^{(i)}}}\right) 
\end{multline*} 
 and with 
$$\gamma_{ij}(\sigma)\coloneqq\gamma_i(\sigma)\gamma_j(\sigma),\qquad \gamma_i(\sigma)
\coloneqq \frac{(\|\phi(z_i)\|^2+p\sigma^2)^{1/2}}{\|\phi(z_i)\|},
$$
$$
\max_{i\neq j\in[n]}\left|\CosSim(\Y_i,\Y_j)-\frac{\CosSim(\phi(z_i),\phi(z_j))}{\gamma_{ij}(\sigma)}\right| \in O_{\P}\left(\sqrt{\frac{\log n}{\min_{i\in[n]}p_{\mathrm{int}}^{(i)}}}\right). 
$$
\end{prop}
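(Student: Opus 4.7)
The plan is to reduce both bounds to Theorem~\ref{thm:rand_func} via a computation of the relevant moments of $\Y_i$. Under \textbf{A\ref{ass:X_j}}--\textbf{A\ref{ass:E}}, the cross-terms involving $\X(z_i)$ or $\Eb_i$ in $\E[\Y_i\cdot\Y_j]$ all vanish by zero-mean and independence, so $\E[\Y_i\cdot\Y_j]=\E[\Y^{\mathrm{nf}}(z_i)\cdot\Y^{\mathrm{nf}}(z_j)]+p\sigma^2\mathbf{I}[i=j]=\phi(z_i)\cdot\phi(z_j)+p\sigma^2\mathbf{I}[i=j]$ by the Mercer representation~\eqref{eq:mercer}, and setting $i=j$ gives $\E[\|\Y_i\|^2]=\|\phi(z_i)\|^2+p\sigma^2=\mathrm{tr}[\SS(z_i)]+\|\MMu(z_i)\|^2+p\sigma^2$. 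Writing $f_i:=\E[\|\Y_i\|^2]^{1/2}$ and $\epsilon_n:=\sqrt{\log n/\min_i p_{\mathrm{int}}^{(i)}}$, Theorem~\ref{thm:rand_func} gives $\max_{i,j}|\Y_i\cdot\Y_j/(f_if_j)-\E[\Y_i\cdot\Y_j]/(f_if_j)|\in O_\P(\epsilon_n)$; multiplying through by $f_if_j/p$ and using $\max_{i,j}f_if_j/p=\max_i\E[\|\Y_i\|^2]/p=\max_i(\mathrm{tr}[\SS(z_i)]+\|\MMu(z_i)\|^2)/p+\sigma^2$ yields the first bound.

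For the cosine similarity, I would write $\CosSim(\Y_i,\Y_j)=A_{ij}/(B_iB_j)$ with $A_{ij}:=\Y_i\cdot\Y_j/(f_if_j)$ and $B_i:=\|\Y_i\|/f_i$. For $i\neq j$, a direct calculation shows the target $\CosSim(\phi(z_i),\phi(z_j))/\gamma_{ij}(\sigma)$ equals $\phi(z_i)\cdot\phi(z_j)/(f_if_j)=\E[\Y_i\cdot\Y_j]/(f_if_j)=:A_{ij}^*$, so Theorem~\ref{thm:rand_func} immediately gives $\max_{i\neq j}|A_{ij}-A_{ij}^*|\in O_\P(\epsilon_n)$. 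Applying Theorem~\ref{thm:rand_func} on the diagonal yields $\max_i|B_i^2-1|\in O_\P(\epsilon_n)$, which on the high-probability event $\{\min_i B_i\geq 1/2\}$ (eventually guaranteed as $\epsilon_n\to 0$) implies $\max_i|B_i-1|\in O_\P(\epsilon_n)$ via the factorization $B_i^2-1=(B_i-1)(B_i+1)$.

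To combine these I would use the triangle-inequality decomposition
\begin{equation*}
\left|\frac{A_{ij}}{B_iB_j}-A_{ij}^*\right|\leq\frac{|A_{ij}-A_{ij}^*|}{B_iB_j}+|A_{ij}^*|\frac{|1-B_iB_j|}{B_iB_j},
\end{equation*}
together with $|A_{ij}^*|\leq 1$ (by Cauchy--Schwarz, using $f_i\geq\|\phi(z_i)\|$), $1/(B_iB_j)=O_\P(1)$ uniformly, and $|1-B_iB_j|\leq|1-B_i|+B_i|1-B_j|\in O_\P(\epsilon_n)$; taking the maximum over $i\neq j$ delivers the desired $O_\P(\epsilon_n)$ bound. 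The main obstacle is the passage from the deterministic normalizer $f_if_j$ in Theorem~\ref{thm:rand_func} to the random normalizer $\|\Y_i\|\|\Y_j\|$ appearing in the cosine similarity; this is precisely what the diagonal application of Theorem~\ref{thm:rand_func} resolves, the remainder being careful bookkeeping that preserves the $\epsilon_n$ rate thanks to the uniform boundedness of $|A_{ij}^*|$ and of $1/(B_iB_j)$ with high probability.
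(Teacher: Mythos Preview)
Your argument is correct and follows the same overall strategy as the paper: identify $\E[\Y_i\cdot\Y_j]=\phi(z_i)\cdot\phi(z_j)+p\sigma^2\mathbf{I}[i=j]$ and $\E[\|\Y_i\|^2]=\|\phi(z_i)\|^2+p\sigma^2$, reduce both claims to Theorem~\ref{thm:rand_func}, and for the cosine similarity handle the random normalizer $\|\Y_i\|\|\Y_j\|$ via the diagonal case of Theorem~\ref{thm:rand_func}.

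Two minor differences are worth recording. First, for the cosine-similarity decomposition the paper writes the error as $(A_{ij}-A_{ij}^*)+\Y_i\cdot\Y_j\,[1/(\|\Y_i\|\|\Y_j\|)-1/(f_if_j)]$ and bounds the second term by a vector-level Cauchy--Schwarz argument, whereas you pull out the deterministic coefficient $A_{ij}^*$; your route is arguably cleaner since $|A_{ij}^*|\le 1$ holds deterministically. Second, for the step $\max_i|B_i^2-1|\in O_\P(\epsilon_n)\Rightarrow\max_i|B_i-1|\in O_\P(\epsilon_n)$, the paper avoids your high-probability event $\{\min_i B_i\ge 1/2\}$ by the elementary inequality $|a-1|\le|a^2-1|$ valid for all $a>0$ (since $(a+1)|a-1|=|a^2-1|$ and $a+1\ge 1$). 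This gives the implication pointwise and for every $n$, so no appeal to $\epsilon_n\to 0$ is needed; your version works too, but only because in the regime where $\epsilon_n\not\to 0$ the cosine-similarity bound is vacuous.
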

Recalling \eqref{eq:pairwise}, the first part of proposition \ref{prop:rand_func} implies that if $\max_{i\in[n]}/(\mathrm{tr}[\SS(z_i)]+\|\boldsymbol{\mu}(z_i)\|^2)/p+\sigma^2\in O(1)$  and $\min_i p_{\mathrm{int}}^{(i)}\gg n$, then for $i\neq j$,
\vspace{-5pt}
\begin{equation}\label{eq:distance}
\frac{1}{p}\|\Y_i-\Y_j\|^2\approx \frac{1}{p}\|\phi(z_i)-\phi(z_j)\|^2 +2\sigma^2.
\end{equation}
For the second part of proposition \ref{prop:rand_func}, notice that $\gamma_{ij}{(0)}=1$, and for any $\sigma>0$, if $\|\phi(z)\|$ is constant in $z$ (e.g. $\mathcal{Z}$ is a vector space and the kernel \eqref{eq:kernel_defn} is a function of $z-z^\prime$), then $\gamma_{ij}^{(\sigma)}$ is constant in $i,j$.

\paragraph{Relationship between metric space $\mathcal{Z}$ and manifold $\mathcal{M}$.}

Lemma \ref{lem:homeo} shows that under a mild non-degeneracy condition on $\Y^{\mathrm{nf}}(\cdot)$, $\mathcal{M}$ is topologically equivalent to $\*Z$.
\begin{assumption} \label{ass:homeo}
If $z\neq z^\prime$, then $\E[\|\Y^{\mathrm{nf}}(z)-\Y^{\mathrm{nf}}(z^\prime)\|^2]>0$. 
\end{assumption}
\begin{lem}\label{lem:homeo}Under \textbf{A\ref{ass:MScontinuity}}-\textbf{A\ref{ass:homeo}},  $\phi$ is a homeomorphism between $\*Z$ and  $\mathcal{M}$, i.e., $\phi$ is continuous, invertible on its image, and has a continuous inverse.
 \end{lem}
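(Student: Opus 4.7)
The plan is to reduce everything to the isometry-in-expectation identity
\[
\|\phi(z)-\phi(z')\|^2 \;=\; \mathbb{E}\bigl[\|\Y^{\mathrm{nf}}(z)-\Y^{\mathrm{nf}}(z')\|^2\bigr],
\]
which follows by expanding the squared norm on the right and applying Mercer's representation \eqref{eq:mercer} to each of the three inner products (using $\mathbb{E}[\Y^{\mathrm{nf}}(z)\cdot\Y^{\mathrm{nf}}(z')]=\phi(z)\cdot\phi(z')$). This identity turns the qualitative hypotheses \textbf{A\ref{ass:MScontinuity}} and \textbf{A\ref{ass:homeo}} directly into metric statements about $\phi$.

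With the identity in hand, continuity of $\phi$ is immediate: by \textbf{A\ref{ass:MScontinuity}}, $z\to z'$ forces $\mathbb{E}[\|\Y^{\mathrm{nf}}(z)-\Y^{\mathrm{nf}}(z')\|^2]\to 0$, hence $\|\phi(z)-\phi(z')\|\to 0$. Injectivity is equally direct: if $z\neq z'$, then by \textbf{A\ref{ass:homeo}} the right-hand side of the identity is strictly positive, so $\phi(z)\neq \phi(z')$. Thus $\phi$ is a continuous bijection from $\mathcal{Z}$ onto $\mathcal{M}$.

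For continuity of the inverse I would invoke the standard topological fact that any continuous bijection from a compact space to a Hausdorff space is a homeomorphism (closed sets in $\mathcal{Z}$ are compact, their continuous images are compact and hence closed in the Hausdorff space $\ell_2$, so $\phi$ is a closed map, equivalently $\phi^{-1}$ is continuous). Compactness of $\mathcal{Z}$ is supplied by \textbf{A\ref{ass:Z_compact}}, and $\ell_2$ is a metric space, hence Hausdorff; these are precisely the ingredients needed.

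I do not anticipate a serious obstacle. The only care-point is making sure Mercer's theorem applies so that the identification $\mathbb{E}[\Y^{\mathrm{nf}}(z)\cdot\Y^{\mathrm{nf}}(z')]=\phi(z)\cdot\phi(z')$ holds pointwise on $\mathcal{Z}\times\mathcal{Z}$ (rather than merely $\nu$-almost everywhere); this is exactly why \textbf{A\ref{ass:MScontinuity}} and \textbf{A\ref{ass:Z_compact}} were invoked just above \eqref{eq:mercer} to guarantee continuity of the kernel, and it is what lets us read the isometry identity as a statement about the feature map $\phi$ defined at every point of $\mathcal{Z}$.
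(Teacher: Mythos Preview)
Your proof is correct and follows essentially the same route as the paper: both establish the identity $\|\phi(z)-\phi(z')\|^2=\mathbb{E}[\|\Y^{\mathrm{nf}}(z)-\Y^{\mathrm{nf}}(z')\|^2]$ via \eqref{eq:mercer}, read off continuity from \textbf{A\ref{ass:MScontinuity}} and injectivity from \textbf{A\ref{ass:homeo}}, and then appeal to the compact-domain fact (the paper cites Sutherland, Prop.~13.26) that a continuous bijection on a compact space has continuous inverse. Your added remark about pointwise validity of Mercer's representation is a nice clarification but not a departure.
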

Informally, this means $\mathcal{M}$ resembles $\*Z$ subject to some transformation such as bending, twisting or stretching, but not cutting, or puncturing; $\mathcal{Z}$ and $\mathcal{M}$ must have the same number of connected components, the same number of one-dimensional holes, two-dimensional cavities, and so on. The term \emph{topological manifold} conventionally refers to some topological space which is \emph{locally} homeomorphic to a subset of Euclidean space; we can therefore speak of $\*M$ as a topological manifold, but one which is \emph{globally} homeomorphic to the metric space $\*Z$.  

In combination,  theorem \ref{prop:rand_func}, proposition \ref{thm:rand_func} and lemma \ref{lem:homeo} tell us that if the points in $\*Z_n$ are distributed across $\mathcal{Z}$, then $\mathcal{M}_n$ and hence $\mathcal{Y}_n$ will convey the `shape' of $\*Z$ -- in the remaining sections of the paper we will explore implications of this in TDA and manifold learning. 


\paragraph{A toy example: $\*Z$ is a circle. } Let $\*Z=\{z=(z_1,z_2):z_1^2+z_2^2=1\}$,  $r=3$, $\phi(z)=p^{1/2}[z_1,\frac{2}{\pi}\sin(\pi z_2/2),\frac{2}{\pi}\cos(\pi z_2/2)]$. The random functions $X_j$ are i.i.d., zero-mean Gaussian processes each with covariance function $p^{-1}\phi(z)\cdot\phi(z^\prime)$, $\boldsymbol{\mu}(z)=0$, $\SS(z)=\mathbf{I}_p$ and the elements of $\Eb_i$ are distributed $\mathcal{N}(0,1)$, and $\sigma^2=0.02$. Figure \ref{fig:sim_Example} shows the first $r=3$ dimensions of the SVD embedding of $\mathcal{Y}_n$ for $p_{\mathrm{int}}^{(i)}=p=3,8,20$, with $n=1000$, and $z_1,\ldots,z_{1000}$ uniformly spaced around $\*Z$. As $p$ increases the manifold $\mathcal{M}$  emerges, and is homeomorphic to $\mathcal{Z}$.

\section{Consistency of persistence diagrams}\label{sec:tda_consistency}
In this section we discuss how, under the model from section \ref{sec:data_model}, TDA can be applied to $\mathcal{Y}_n$ in order to estimate certain topological characteristics of $\mathcal{M}$ and hence $\mathcal{Z}$. In the mathematical framework of persistent homology, a topological space has associated \emph{Betti numbers} $H_0,H_1,H_2,\ldots$, respectively indicating the number of connected components, 1D holes, 2D cavities, etc.,  which the space exhibits. Two homeomorphic spaces, such as $\mathcal{Z}$ and $\mathcal{M}$ in the setting of lemma \ref{lem:homeo}, have the same Betti numbers. TDA techniques \cite{edelsbrunner2002topological,zomorodian2004computing,chazal2009proximity,chazal2021introduction}, for example implemented in the python package \verb|ripser| \cite{Tralie2018}, allow \emph{persistence diagrams} associated with data point-clouds to be computed (an example is shown in figure \ref{fig:torus}(c)), in turn enabling Betti numbers of the underlying space to be estimated.  

We view the sets $p^{-1/2}\*Y_n$, $p^{-1/2}\*M_n$ and $p^{-1/2}\*M$  as a metric spaces by equipping them with Euclidean (i.e., $\ell_2$) distance. We denote by $\text{dgm}(\cdot)$ persistence diagrams under some common choice of filtration\footnote{which could be either the Rips, \v{C}ech, or Alpha filtration.}. In the supplementary material we also discuss TDA operating on the normalised vectors $\Y_i/\|\Y_i\|$. 
A careful combination of existing results \citep{chazal2013optimal,ivanov2016realizations} detailed in the supplementary material gives
\vspace{-5pt}
\begin{multline}
d_{\text{b}}\left(\text{dgm}(p^{-1/2}\mathcal{Y}_n), \text{dgm}(p^{-1/2}\mathcal{M})\right) \\ \leq 2 (d_{\text{H}}(p^{-1/2}\mathcal{M}_n, p^{-1/2}\mathcal{M}) + d_{\text{GH}}(p^{-1/2}\mathcal{Y}_n,  p^{-1/2}\mathcal{M}_n)), \label{eq:bottleneck}
\end{multline}
where $d_{\text{b}}$ is the bottleneck distance, a distance between persistence diagrams; $d_{\text{GH}}$ is the Gromov-Hausdorff distance, a distance between metric spaces; and $d_{\text{H}}$ is the Hausdorff distance, a distance between subsets of a metric space \cite{burago2001course}. 

The persistence diagram $\text{dgm}(p^{-1/2}\mathcal{Y}_n)$ is said to be consistent if the left-hand-side of \eqref{eq:bottleneck} converges to zero in probability. The first term on the right-hand-side can be shown to vanish as $n \rightarrow \infty$, if $\phi(z_1), \ldots, \phi(z_n)$ are i.i.d. from a measure on $\mathcal{M}$ satisfying standard regularity conditions \citep{chazal2013optimal}. Proposition \ref{prop:rand_func} allows us to show that, up to an additive term depending only on $\sigma$, 
\emph{the second term vanishes}, as long as $\max_{i\in[n]}(\mathrm{tr}[\SS(z_i)]+\|\boldsymbol{\mu}(z_i)\|^2)/p+\sigma^2\in O(1)$ and  $\min_{i\in[n]}p_\mathrm{int}^{(i)}/\log n \to\infty$:
\[d_{\text{GH}}^2(p^{-1/2}\mathcal{Y}_n,  p^{-1/2}\mathcal{M}_n) \leq \max_{i,j \in [n]}\frac{1}{p}\left|\Y_i \cdot \Y_j - \phi(z_i) \cdot \phi(z_j) \right| \leq \sigma^2 +O_\mathbb{P}\left(\sqrt{\frac{\log n}{\min_{i\in[n]}p_{\mathrm{int}}^{(i)}}}\right)
,\]
where the first inequality is shown in the supplementary material. This tells us that accurate estimation of topology is possible without requiring $p\gg n$.
\section{Looking for evidence of isometry}\label{sec:isometry}

Isometry is a stronger form of relationship between between $\mathcal{Z}$
and $\mathcal{M}$ than homeomorphism, requiring that shortest path
lengths on $\mathcal{M}$ faithfully represent those on $\mathcal{Z}$ \cite{burago2001course}.
To define isometry mathematically, consider the random function model
with, for some $d\leq\tilde{d}$, $\mathcal{Z}$ a smooth, locally
$d$-dimensional subset of $\mathbb{R}^{\tilde{d}}$, with $d_{\mathcal{Z}}$
being Euclidean distance. Examples of such $\mathcal{Z}$ include,
in the case of $d=1$, a line segment, curve or circle (as in the
toy example above), and in the case $d=2$, a disk, sphere, or
torus. We refer to $d$ as \emph{latent intrinsic dimension.}

For any two points $z,z^{\prime}\in\mathcal{Z}$, a \emph{path} with end-points
$z,z^{\prime}$ is a smooth curve $\eta:[0,1]\to\mathcal{Z}$, such
that $\eta_{0}=z$ and $\eta_{1}=z^{\prime}$; similarly a path between
$x,x^{\prime}\in\mathcal{M}$ is a smooth curve $\gamma:[0,1]\to\mathcal{M}$,
with $\gamma_{0}=x$, $\gamma_{1}=x^{\prime}$. The lengths of these
paths are $L(\eta)\coloneqq\int_{0}^{1}\|\dot{\eta}_{t}\|\mathrm{dt}$ and $L(\gamma)\coloneqq\int_{0}^{1}\|\dot{\gamma}_{t}\|\mathrm{dt}$. We say that \emph{isometry} holds if for any path $\eta$ in $\mathcal{Z}$
and the path in $\gamma$ defined by $\gamma_{t}\coloneqq\phi(\eta_{t})$,
$L(\gamma)$ and $L(\eta)$ are equal up to a constant of proportionality.
In particular this implies that the length of any shortest path in
$\mathcal{Z}$ with end-points $z,z^{\prime}$ is equal to the length
of any shortest path in $\mathcal{M}$ with end-points $\phi(z),\phi(z^{\prime})$. It is known that isometry holds under various sufficient conditions on the  kernel function $(z,z^{\prime})\mapsto\mathbb{E}[\mathbf{Y^{\mathrm{nf}}}(z)\cdot\mathbf{Y}^{\mathrm{nf}}(z^{\prime})]$ \cite{whiteley2021matrix,whiteley2022statistical}. However, in practice this kernel function will often be unknown.
We can nevertheless assess evidence of isometry given $\*Y_n$ and $\*Z_n$ in a manner inspired by the manifold learning literature, specifically the first step of Isomap \cite{tenenbaum2000global}. 

We approximate shortest
path-lengths in $\mathcal{Z}$ by constructing a $k$-nearest\footnote{a default choice of $k$ is to choose it to be the smallest value such that the graph is connected} neighbour graph with vertex set $\*Z_n$
and edge-lengths $\|z_{i}-z_{j}\|$ if $z_{i}$ and $z_{j}$ are neighbours;
then compute shortest paths in this edge-weighted graph. To approximate
shortest path-lengths in $\mathcal{M}$, we similarly construct a $k$-nearest neighbour graph with vertex set $\*Y_n$
and edge lengths $\|\mathbf{Y}_{i}-\mathbf{Y}_{j}\|$. We write $\widehat{L}(z_{i},z_{j})$
and $\widehat{L}(\mathbf{Y}_{i},\mathbf{Y}_{j})$ for the resulting shortest path-lengths, noting that these lengths 
of the form $\sum_{m}\|z_{k_{m}}-z_{k_{m-1}}\|$ and $\sum_{m}\|\mathbf{Y}_{\ell_{m}}-\mathbf{Y}_{\ell_{m-1}}\|$
for some $[n]$-valued sequences $(k_{m})$ and $(\ell_{m})$. If isometry holds with constant of proportionality $\beta$, then when $z$ is close to $z^\prime$, $\|\phi(z)-\phi(z^\prime)\| \approx \beta \|z-z^\prime\|$. This suggests assessing evidence of isometry through linear regression of the values $(\widehat{L}(\mathbf{Y}_{i},\mathbf{Y}_{j});i\neq j)$
on to the associated values $(\widehat{L}(z_{i},z_{j});i\neq j)$ allowing for a positive intercept to account for noise (informed by \eqref{eq:distance}). The correlation coefficient, $\rho$, associated with this linear regression numerically quantifies the presence of isometry, with a maximum value of $\rho=1$ corresponding to a perfect linear fit.


\vspace{-0.2cm}
\section{Manifold analysis of grid cell activity}\label{sec:neuro}
\vspace{-0.2cm}
\paragraph{Context: Gardner et al.'s discovery of toroidal structure in grid cell activity.} Understanding how spatial location is represented in brain activity  is a fundamental neuroscience challenge. In 2014, May-Britt Moser and Edvard I. Moser received a Nobel prize for their discovery of \emph{grid cells} -- nerve cells whose firing activity is associated with a grid of spatial locations \cite{fyhn2004spatial,hafting2005microstructure}. Combined with earlier work of co-prize-winner John \citet{o1976place}, this discovery showed that the brain effectively creates a map of the world around us  through the firing patterns of neurons.   In their 2022 \emph{Nature} paper, \citet{gardner2022toroidal} established a deeper understanding of grid cell  activity using persistent homology techniques, revealing the presence of toroidal structure. We focus on a subset of the experiments they reported, in which a rat was confined to a square enclosure (``open field''), and the firing of $p\approx 150$ grid cells was co-recorded and sub-sampled into $n=15,000$
time bins of length 10ms, while simultaneously recording the rats physical position in the enclosure, which we denote $\xi_1,\ldots,\xi_n$. Representing the neural activity data as $\Y_1,\ldots,\Y_n$, the $j$th element of $\Y_i$ is the firing rate of the $j$th grid cell in the $i$th time bin. After pre-processing  $\Y_1,\ldots,\Y_n$ (details in supplementary material), \cite{gardner2022toroidal} made the following two key findings: 
\begin{enumerate}[topsep=0pt,itemsep=-1ex,partopsep=1ex,parsep=1ex,leftmargin=0.5cm]
\item Persistent homology analysis of $\Y_1,\ldots,\Y_n$ gave estimated Betti numbers of $H_0=1$ (one connected component), $H_1=2$ (two 1D `holes' ) and $H_2=1$ (one 2D cavity), indicating presence of a torus.
\item  Circular coordinates corresponding to the two 1D holes in the torus were found to be associated with physical locations through a two-step transformation: firstly, ``unwrapping'' the surface of the torus to form a rhombus (moving along one edge of the rhombus corresponds to moving $0-360^\circ$ degrees around one of the 1D holds in the torus); and secondly,  covering the physical enclosure with a tessellation of this rhombus. 
\end{enumerate}
Using the data and code of \citet{gardner2022toroidal}, available at \cite{gardner_data, gardner_code}, we re-created this analysis. The results are shown in figure \ref{fig:torus}.    
\begin{figure}
    \centering
    \includegraphics[width=0.9\linewidth]{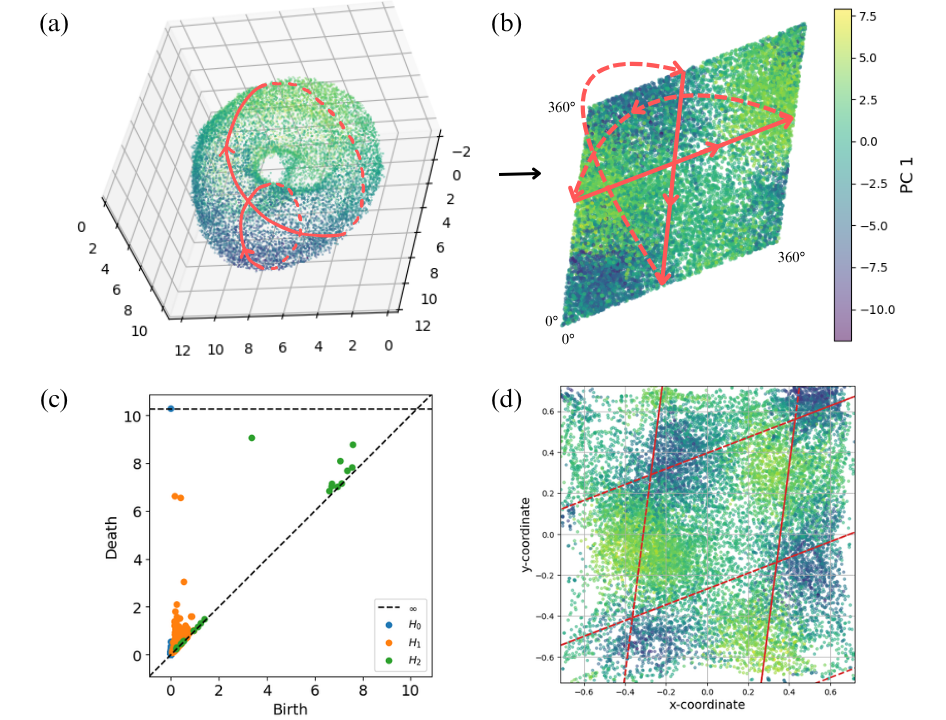}
    \vspace{-5pt}
    \caption{Re-creating the grid cell analysis of \citet{gardner2022toroidal}. (a): UMAP visualisation of the grid cell data $\Y_1,\ldots,\Y_n$. This visualisation suggest the presence of toroidal  structure, confirmed by the persistence diagram in (b), indicating Betti numbers $H_0=1$, $H_1=2$, $H_2=1$. In (c), cohomological decoding maps the circular coordinates of the torus to a rhombus. (d) shows how these coordinates correspond to physical space through tesselation of the rhombus. In (a), (c) and (d), points are colored  by the first component in the PCA embedding of the data to aid visual recognition of the torus.
    }
    \label{fig:torus}
    \vspace{-15pt}
\end{figure}

\paragraph{Our contribution: revisiting Gardner et al.'s data in search of isometry}

The analysis of \citet{gardner2022toroidal} indicates how the circular on-torus coordinates align with x-y coordinates in physical space, but we take this one step further, asking:\vspace{-0.2cm}
\begin{center}is there \emph{isometry} between the toroidal structure found in $\Y_1,\ldots,\Y_n$ and physical space?
\end{center}\vspace{-0.2cm}
If so, this would extend the topological findings of \cite{gardner2022toroidal}, indicating that the firing activity of grid cells conveys a geometrically faithful map of the world around us. To frame this question, we consider the grid-cell firing data $\Y_1,\ldots,\Y_n$  to be generated from  three alternative instances of the model from section \ref{sec:data_model}, each of which is built upon a different choice of metric space $(\mathcal{Z},d_{\mathcal{Z}})$ used to represent physical locations. In each case, we then look for evidence of isometry between $\mathcal{Z}$ and $\mathcal{M}$.



\textbf{Model 1: Open field, with real-world Euclidean distance.}
$\*Z$ is the open field of all real-world physical locations the rat could have possibly occupied, specifically the square $[-0.75,0.75]^2\subset \mathbb{R}^2$ and $z_i=\xi_i$ is the physical location of the rat at the $i$th time point. The metric $d_{\mathcal{Z}}$ is Euclidean distance, i.e. straight-line distance. Making no use of the topological analysis in \cite{gardner2022toroidal}, this model can be regarded as the default hypothesis for how physical geometry relates to the geometry of $\mathcal{M}$ and hence $\mathcal{Y}_n$.

\textbf{Model 2: Superimposition on rhombus, with Euclidean distance.}
$\*Z$ is the central rhombus in the tesselation of  $[-0.75,0.75]^2$ obtained by \citet{gardner2022toroidal} (see figure \ref{fig:path_analysis} (b), left plot)  and $z_i$ is the superimposition onto this rhombus of the physical location of the rat at the $i$th time point (figure \ref{fig:path_analysis} (b), middle plot). The metric $d_{\*Z}$ is Euclidean distance on the rhombus. Unlike Model 1, Model 2 accounts for the tessellated rhomboidal pattern in the firing data as a function of physical space.

\textbf{Model 3: Superimposition on rhombus, with Euclidean teleportation distance.}
$\*Z$ and $z_i$ are the same as in Model 2, but $d_{\*Z}$ is Euclidean distance subject to periodic boundary conditions on the rhombus, i.e., points on opposing edges of the rhombus correspond to the same position (same circular coordinates) and therefore paths on $\*Z$ can `teleport' at the edges to the corresponding point on the opposite edge. Defining $d_{\*Z}$ in this way extends Model 2 to respect toroidal topology. 


In Models 2 and 3, the choice of superimposing on the central rhombus rather than any of the others is arbitrary -- all that matters is the superimposition of physical locations on to one common rhombus. 
The difference between the three models is illustrated in figure \ref{fig:path_analysis}(b). Let $\xi_i,\xi_j$ be two physical locations which the rat occupied at some points $i,j$ in time,
indicated by respectively blue and green dots in the left plot of figure \ref{fig:path_analysis}(b). Under Model 1, where $z_i = \xi_i$ and $ z_j=\xi_j$, the shortest path in $\*Z_n$ between these two points is shown in green. In Model 2, $z_i$ and $z_j$ are the superimposition of $\xi_i$ and $\xi_j$ onto the central rhombus; for the particular points in the example of \ref{fig:path_analysis}(b) middle plot, we have $\xi_i= z_i$ (blue), but $\xi_j\neq  z_j$ (green). The shortest path in $\*Z_n$ is shown in green. In Model 3, $\xi_i$ and $\xi_j$ are represented by the same points $z_i$ and $z_j$ as in Model 2, but the definition of distance and hence shortest path is different because of `teleporting', represented by a dashed line in the right plot of \ref{fig:path_analysis}(b), is allowed.

Figure \ref{fig:path_analysis}c) shows comparisons of shortest path-lengths in $\*Z_n$ and in $\*Y_n$ under Models 1-3. As per section \ref{sec:isometry}, an isometric relationship between $\*Z$ and $\*M$ in any one of these models would manifest itself in a linear relation between shortest path-lengths.  For each model, orange lines show straight line fit from OLS with corresponding correlation coefficients, $\rho$, reported, red lines show moving averages (with window size 0.01$n$), and shading indicates $\pm 1$s.d. Model 3 clearly achieves the best linear fit. We believe that the deviation from linear in the top-right is likely due to error in estimation of the vectors defining the rhombus.

In conclusion: we find that grid cell activity conveys a faithful representation of distances in the real world, where the neural representations live close to a toroidal structure, embedded in high-dimensional space. Put more concretely: we find evidence that $(\mathcal{Z}, d_\mathcal{Z})$, as defined in Model 3, is \textit{isometric} to the neural manifold $\mathcal{M}$, whose toroidal topology is confirmed by persistent homology analysis.

\begin{figure}[]
    \centering
    \includegraphics[width=0.88\linewidth]{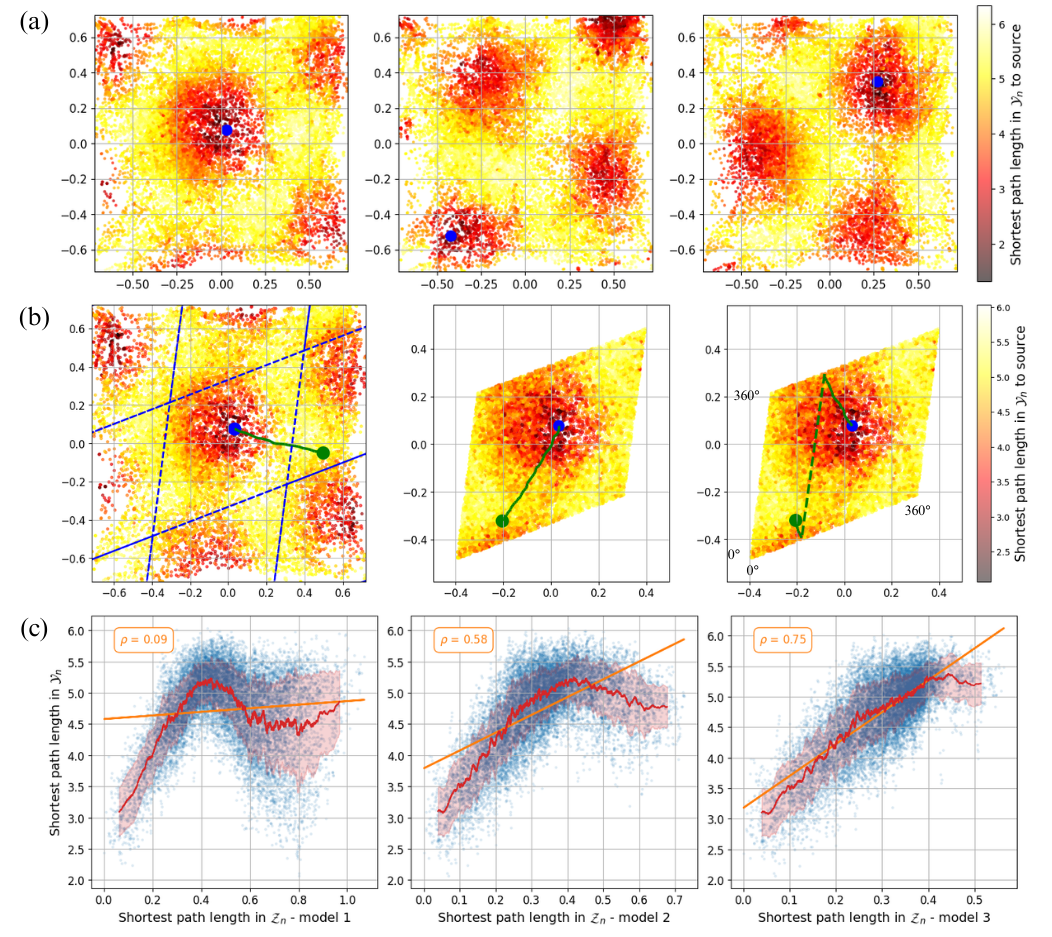}
    \vspace{-7pt}
    \caption{\textbf{(a)} For 3 possible source locations shown in blue, all other physical locations visited by the rat are colored by the shortest path-length in $\mathcal{Y}_n$ from the source. \textbf{(b)} \textit{Left:} The tessellated rhombus is plotted atop the physical locations. Two physical locations (blue, green), and the shortest path in $\*Z_n$ between them under Model 1 are shown. \textit{Middle}: Representations of the same source and sink locations under Model 2, in which physical locations are superimposed on the central rhombus in the tesselation. Shortest path in $\*Z_n$ under Model 2 is shown. \textit{Right}: Under Model 3, the distance $d_{\mathcal{Z}}$ allows for `teleporting' (dashed line). \textbf{(c)} relationships between shortest path-lengths in $\*Y_n$ and in $\*Z_n$ for Models 1-3 (left to right). Orange lines show best linear fit and red lines show moving averages (shading for $\pm 1$s.d.). Strongest evidence of isometry appears for Model 3.}    
    \label{fig:path_analysis}
    \vspace{-10pt}
\end{figure}

\vspace{-0.3cm}
\section{Limitations and outlook}\vspace{-0.3cm}The generic structure of our statistical model does not constrain us to one or other specific parametric family of distributions. However, our $\sqrt{\log n/ p_{\mathrm{int}}}$ convergence rates are directly linked to the sub-Gaussian assumptions we make. Weakening to sub-exponential or polynomial moment conditions would result in slower convergence rates, $\log n$ being replaced by some faster growing function of $n$, connecting back to the HDLSS setting of \cite{hall2005geometric} under their very mild $4$-th moment conditions.  However, if one were to make specific distributional assumptions (e.g. Gaussian, as in the Gaussian Process Latent Variable Model of \cite{lawrence2003gaussian,lawrence2005probabilistic}) then, in principle, one could pursue parametric, likelihood-based inference, assess model fit, etc. We view this as an important but different task to conventional TDA and manifold learning as considered here. 

We don't consider there to be any direct societal impacts of our work, but one of the key messages conveyed by our theoretical results: that  $p\gg n$ is not required for latent geometry to emerge in data, potentially broadens the applicability of manifold learning methods across application domains. Another key message is that the limiting ``close-to-orthogonal'' geometry of data in proposition \ref{prop:iid} is just one possibility; theorem \ref{thm:rand_func} and proposition \ref{prop:rand_func} imply a huge range of alternatives exist. We think the limiting data geometry in proposition \ref{prop:iid} and similar `thin-shell' concentration phenomena have entered into Machine Learning folklore; they are often relied upon when arguing that learning methods suffer from the \emph{curse of dimensionality}, see e.g., the textbook of \citet{hastie01}[Sec. 2.5]. The present work provides a mathematical framework in which such arguments might, in future, be re-assessed.

\newpage
\bibliographystyle{plainnat}
\bibliography{refs}


\newpage
\appendix

\section*{Appendices}

\section{Code}

Python notebooks for this paper are available at:
\url{https://github.com/hsansford1}

\section{Related Work}

\paragraph{Curse versus blessing of dimension.}
The phrase ``\emph{curse of dimensionality}'' is often used to convey the general idea that a large number of features, $p$, can be statistically or computationally problematic for a variety of learning methods. Amongst several manifestations of this issue, \citet[Sec 2.5]{hastie01} highlight that $n$ i.i.d. samples from the uniform distribution on the mean-zero $p$-dimensional unit ball tend to all like close to its surface as $p\to\infty$ grows. Refined analysis of this phenomenon is developed in \cite{cai2013distributions}. Our work contrasts with this perspective: in the setting of the random function model we introduce, $p_{\mathrm{int}}\to\infty$ leads to the emergence of latent topology and manifold structure in data, arguably more of a  \emph{blessing of dimensionality} \citeapp{kainen11997utilizing,donoho2000high}. See \citeapp[Sec. 2.1]{lawrence2011spectral} for a discussion of similar considerations in the context of dimension reduction techniques.  

\paragraph{The Manifold Hypothesis.}
The manifold hypothesis asserts that nominally high-dimensional data from the real world are concentrated on or near a low-dimensional set embedding in high-dimensional space \citeapp{cayton2005algorithms}, \cite{bengio2013representation,fefferman2016testing}. This phenomenon is the motivation for intrinsic dimension estimation methods, e.g., \citeapp{kegl2002intrinsic,levina2004maximum,levina2004maximum,hein2005intrinsic,carter2009local,little2011estimating} and wide range of nonlinear dimension reduction techniques, including:  \cite{tenenbaum2000global,van2008visualizing, mcinnes2018umap}.

\citet{whiteley2022statistical} proposed a generic form of latent variable model to explain manifold structure in data. The random function model considered in the present paper is a variation on the model of \citet{whiteley2022statistical}, with the following differences:
\begin{itemize}[topsep=0pt,itemsep=0ex,partopsep=1ex,parsep=1ex,leftmargin=0.5cm]
\item In the present paper we parameterise our model in terms of the covariance matrices $\SS(\cdot)$ and mean vectors $\boldsymbol{\mu}(\cdot)$, and it is this parameterization which enables us to explain the role of the ambient intrinsic dimensions $p_{\mathrm{int}}^{(i)}$. \citet{whiteley2022statistical} did not consider this parameterisation or the notion of ambient intrinsic dimension.
\item Connected with this parameterization, the kernel function we consider \eqref{eq:kernel_defn} differs from that of  \citet{whiteley2022statistical} in how it is normalised.
\item \citet{whiteley2022statistical} studied the statistical properties of the PCA embedding of data from their model, under assumptions including: i) finite rank $(r<\infty)$, ii) independence across features, iii) uniformly bounded fourth moments, and iv) latent variables $z_i$ being random, and iid. By contrast our theorem \ref{thm:rand_func} and proposition \ref{prop:rand_func} address not the PCA embedding for rather dot-products between data vectors, we make no assumption about $r$ being finite, we assume sub-Gaussianity, and do not make any statistical assumptions about the $z_i$. In this sense the assumptions of the present work are in some ways more general (we do not require independence) but in other ways stronger and leading to more refined results (we require sub-Gaussianity, and that leads to our $\sqrt{\log n /p_{\mathrm{int}}}$ convergence rates.)
\end{itemize}

\citet{whiteley2022statistical} explained homeomorphic (as per lemma \ref{lem:homeo}) and isometric properties of kernel feature maps, including giving sufficient conditions on the kernel function for isometry to hold, building on older work, \cite{whiteley2021matrix}.

Motivated by hierarchical clustering \citet{gray2023hierarchical} studied asymptotic behaviour of dot-products between data vectors as a function of $n$ and $p$ under a tree structured model. They assumed mixing across dimensions and a polynomial moment condition. The convergence rates they obtain are slower than those we obtain, reflecting our sub-Gaussian assumptions. The notion of ambient intrinsic dimension does not appear in \cite{gray2023hierarchical}.

\paragraph{The Hanson-Wright inequality.}
The Hanson-Wright (HW) inequality is a concentration inequality for quadratic forms $\X^\top \A \X$, where $\X$ is a vector of independent, sub-Gaussian random variables. A version of this theorem was first published in \citep{hanson1971bound,wright1973bound}.  The result has been revisited under various sets of assumptions surveyed by \citet[Rem. 1.2]{rudelson2013}, who presented their own proof using tools from high-dimensional probability. Our GHW inequality concerns concentration of $\+X^\top \+A \+X^\prime$, where the random vectors $\X=(X_1,\ldots,X_p)$ and $\X^\prime=(X^\prime_1,\ldots,X^\prime_p)$ are such that the pairs $(X_j,X_j^\prime)$ are independent across $j=1,\ldots,p$. Our proof follows the strategy of \citet[Thm 1.1] {rudelson2013} very closely and our inequality reduces to exactly their result when $\X = \X^\prime$. At a technical level, the generality achieved in our proof is therefore modest, but in terms of interpretation and applicability, our result presents a substantial step forward because it opens the door to understanding the concentration behaviour of dot products among point clouds using the GHW inequality.

\paragraph{HDLSS asymptotics.}
The theory of high-dimension, low sample size (HDLSS) asymptotics \cite{hall2005geometric,ahn2007high,shen2016statistics,aoshima2018survey} has made a transformative impact on our understanding and intuition for how random vectors behave in high-dimensions. The earliest of these works treated the case where $n$ is fixed and $p\to\infty$, whilst noting that the case $p/n^2\to\infty$ could be treated in a similar fashion. This reflects the  $4$-th moment conditions assumed in \cite{hall2005geometric}, which is considerably weaker than sub-Gaussianity.  The significance of the regime $p/n^2\to\infty$ can be illustrated as follows.

Let $\X=(X_1,\ldots,X_p)$ be a zero-mean random vector with i.i.d elements satisfying $\E[|X_j|^2]=1$ and $\E[|X_j|^4]<\infty$, and let  $\X^\prime$ be an independent copy of $\X$. Then $\E[\|\X\|^2]=p$ and $\E[\X\cdot\X^\prime]=0$. Noting that $\|\X\|^2-p$ and   $\X\cdot\X^\prime$ are both sums of independent, mean-zero random variables, Chebychev's inequality implies:
$$
\P\left(\left|\frac{\|\X\|^2}{p}-1\right|>\delta\right)\;\vee \; \P\left(\left|\frac{\X\cdot\X^\prime}{p}\right|>\delta\right)\leq \frac{\E[|X_1|^4]}{\delta^2 p}.
$$
Via a union bound, it follows that if $\X_1,\ldots,\X_n$ are independent copies of $\X$, then 
$$
\max_{i,j\in[n]}\left|\frac{1}{p} \X_i\cdot\X_j - \mathbf{I}[i=j]\right| \in O_{\P}\left(\frac{n}{\sqrt{p}}\right)
$$
for $p$ growing as $n\to\infty$, cf. proposition \ref{prop:iid}. The sub-Gaussian setting of  proposition \ref{prop:iid} is far stronger than the finite fourth moment condition invoked above. The price to pay for the finite fourth moment condition is the slower $n/\sqrt{p}$ convergence rate. 

\paragraph{Under sub-Gaussian assumptions, when does $p\gg n$ really matter?}
The convergence rates we obtain in the present paper are closely tied to the combination of our sub-Gaussian assumptions and the $\max_{i,j\in[n]}$ appearing in proposition \ref{prop:iid} and theorem \ref{thm:rand_func}. Let use define the matrix $\Y =[\Y_1|\cdots | \Y_n]^\top\in\mathbb{R}^{n\times n}$. Assuming for ease of exposition that $\E[\|\Y_i\|^2]=p$ and, our results about dot products $\Y_i\cdot\Y_j$ as in proposition \ref{prop:iid} and theorem \ref{thm:rand_func} can be re-written as controlling $p^{-1}\|\Y\Y^\top-\E[\Y \Y^\top]\|_{\mathrm{max}}$, where for a matrix $\mathbf{A}=(A_{ij})$, $\|\mathbf{A}\|_{\mathrm{max}} = \max_{ij}|A_{ij}|$. 

Under sub-Gaussian assumptions, other matrix norms of $\Y\Y^\top-\E[\Y \Y^\top]$ can scale differently with $n$ and $p$, notably the spectral norm. For example, with $\tilde{\Y}_j$ denoting the $j$th column of $\Y$ so that $p^{-1}\Y\Y^\top\equiv p^{-1}\sum_{j=1}^p\tilde{\Y}_j\tilde{\Y}_j^\top $, if the vectors $\tilde{\Y}_j$ are i.i.d. and  uniformly bounded as $\|\tilde{\Y}_j\|^2\leq \text{const.} \cdot n$, then by \cite[Sec 1.6.3.]{tropp2015introduction}, $p \sim n \log n$ is sufficient to control the relative error $\E[\|\Y\Y^\top-\E[\Y\Y^\top]\|]/ \|\E[\Y\Y^\top]\|$, and this condition is, in the words of \cite[Sec 1.6.3.]{tropp2015introduction}, ``qualitatively sharp for worst case distributions''.

We note that the condition ``$p\gg n$'' is also famously associated with sparse regression problems in which the number of available covariates, $p$, is much larger than the number of samples, $n$, as can be approached using the Lasso \citeapp{tibshirani1996regression} and similar techniques. Those sparse regression problems seem not to be closely related to the geometric considerations in the present work.

\section{Proofs and supporting results for section \ref{sec:HS}}

The sub-exponential norm of a random variable $X$ is:
\[
\|X\|_{\psi_{1}}\coloneqq\sup_{q\geq1}q^{-1}\mathbb{E}[|X|^{q}]^{1/q}.
\]
\begin{proof}[Proof of theorem \ref{thm:HW_inequality}]
The proof follows the arguments of \citet[Proof of thm 1.1]{rudelson2013} very closely. As they did, by replacing $\X$ with $\X/K$ we can assume without loss of generality that $K=1$.

Our first objective is to estimate 
\[
\pi\coloneqq\mathbb{P}\left(\mathbf{X}^{\top}\mathbf{A}\mathbf{X}^{\prime}-\mathbb{E}\left[\mathbf{X}^{\top}\mathbf{A}\mathbf{X}^{\prime}\right]>t\right).
\]
Let $\mathbf{A}=(a_{ij})_{i,j=1}^{p}$. The independence assumption of  the theorem implies that $X_i$ and $X_j^\prime$ are independent for $i\neq j$. Using this together with the zero-mean assumption of the theorem,
\begin{align*}
\mathbf{X}^{\top}\mathbf{A}\mathbf{X}^{\prime}-\mathbb{E}\left[\mathbf{X}^{\top}\mathbf{A}\mathbf{X}^{\prime}\right] & =\sum_{i,j}a_{ij}X_{i}X_{j}^{\prime}-\sum_{i}a_{ii}\mathbb{E}[X_{i}X_{i}^{\prime}]\\
 & =\sum_{i}a_{ii}\left(X_{i}X_{i}^{\prime}-\mathbb{E}[X_{i}X_{i}^{\prime}]\right)+\sum_{i\neq j}a_{ij}X_{i}X_{j}^{\prime}.
\end{align*}
The problem is split up into estimating deviation probabilities associated
with the diagonal and off-diagonal sums:
\[
\pi\leq\mathbb{P}\left(\sum_{i}a_{ii}\left(X_{i}X_{i}^{\prime}-\mathbb{E}[X_{i}X_{i}^{\prime}]\right)>t/2\right)+\mathbb{P}\left(\sum_{i\neq j}a_{ij}X_{i}X_{j}^{\prime}>t/2\right)=:\pi_{1}+\pi_{2}.
\]

Note that $X_{i}X_{i}^{\prime}-\mathbb{E}[X_{i}X_{i}^{\prime}]$ are
independent, mean-zero, sub-exponential random variables, indeed for
$q\geq1$, by Minkowski's, Jensen's and the Cauchy-Schwartz inequalities,
\begin{multline*}
\mathbb{E}\left[\left|X_{i}X_{i}^{\prime}-\mathbb{E}[X_{i}X_{i}^{\prime}]\right|^{q}\right]^{1/q}\leq\mathbb{E}\left[\left|X_{i}X_{i}^{\prime}\right|^{q}\right]^{1/q}+\left|\mathbb{E}[X_{i}X_{i}^{\prime}]\right|\\
\leq2\mathbb{E}\left[\left|X_{i}X_{i}^{\prime}\right|^{q}\right]^{1/q}\leq2\mathbb{E}\left[\left|X_{i}\right|^{2q}\right]^{1/2q}\mathbb{E}\left[\left|X_{i}^{\prime}\right|^{2q}\right]^{1/2q}
\end{multline*}
and hence
\[
\|X_{i}X_{i}^{\prime}-\mathbb{E}[X_{i}X_{i}^{\prime}]\|_{\psi_{1}}\leq2\|X_{i}X_{i}^{\prime}\|_{\psi_{1}}\leq4\|X_{i}\|_{\psi_{2}}\|X_{i}^{\prime}\|_{\psi_{2}}\leq4.
\]
By a Bernstein-type inequality \cite[Prop 5.16]{vershynin2010introduction}, 

\[
\pi_{1}\leq\exp\left[-c\min\left\{ \frac{t^{2}}{\|\mathbf{A}\|_{\mathrm{F}}^{2}},\frac{t}{\|\mathbf{A}\|}\right\} \right].
\]

It remains to consider the term:

\[
S\coloneqq\sum_{i\neq j}a_{ij}X_{i}X_{j}^{\prime}.
\]
For $\lambda>0$ we have 
\[
\pi_{2}=\mathbb{P}\left(S>t/2\right)\leq\exp(-\lambda t/2)\mathbb{E}\left[\exp(\lambda S)\right].
\]
Let $\delta_{1},\ldots,\delta_{p}$ be independent $\{0,1\}$-valued
random variables with $\mathbb{E}[\delta_{i}]=1/2$. Then 
\[
S=4\mathbb{E}_{\delta}[S_{\delta}],\quad\text{where}\quad S_{\delta}\coloneqq\sum_{i,j}\delta_{i}(1-\delta_{j})a_{ij}X_{i}X_{j}^{\prime},
\]
and where $\mathbb{E}_{\delta}$ denotes expectation with respect the
distribution of $\delta=(\delta_{1},\ldots,\delta_{p})$. By Jensen's
inequality,
\[
\mathbb{E}\left[\exp(\lambda S)\right]\leq\mathbb{E}\left[\exp(4\lambda S_{\delta})\right].
\]
 With $\Lambda_{\delta}\coloneqq\{i\in[p]:\delta_{i}=1\}$, we have

\[
S_{\delta}=\sum_{j\in\Lambda_{\delta}^{c}}X_{j}^{\prime}\left(\sum_{i\in\Lambda_{\delta}}a_{ij}X_{i}\right).
\]
Conditional on $\delta$ and $(X_{i})_{i\in\Lambda_{\delta}}$, $S_{\delta}$
is a linear combination of mean-zero, sub-Gaussian random variables
$X_{j}^{\prime}$, $j\in\Lambda_{\delta}^{c}$, with associated coefficients
$\sum_{i\in\Lambda_{\delta}}a_{ij}X_{i}$. It follows that the conditional
distribution of $S_{\delta}$ is sub-Gaussian with (conditional) sub-Gaussian
norm:
\[
\|S_{\delta}\|_{\psi_{1}}\leq C\sigma_{\delta},\qquad\sigma_{\delta}^{2}\coloneqq\sum_{j\in\Lambda_{\delta}^{c}}\left(\sum_{i\in\Lambda_{\delta}}a_{ij}X_{i}\right)^{2},
\]
for some constant $C>0$. 

Noting that $\sigma_\delta^2$ is a function of $\X$ but not of $\X^\prime$, the remaining steps in the proof follow the same arguments as in \citet[Proof of thm 1.1]{rudelson2013}, so the details are omitted.
\end{proof}

\begin{proof}[Proof of proposition \ref{prop:iid}]
Under the assumptions of the proposition we have $\mathbb{E}[\|\mathbf{Y}_{i}\|^{2}]=\mathrm{tr}\boldsymbol{\Sigma}$
and $\mathbb{E}[ \mathbf{Y}_{i}\cdot\mathbf{Y}_{j}]=0$
for $i\neq j$, hence 
\[
\pi_{ij}(t)\coloneqq\mathbb{P}\left(\left|\frac{  \mathbf{Y}_{i}\cdot\mathbf{Y}_{j} }{\mathrm{tr}\boldsymbol{\Sigma}}-\mathbf{I}[i=j]\right|>t\right)=\mathbb{P}\left(\left|\mathbf{X}_{i}^{\top}\boldsymbol{\Sigma}\mathbf{X}_{j} -\mathbb{E}[  \mathbf{X}_{i}^{\top}\boldsymbol{\Sigma}\mathbf{X}_{j} ]\right|>t\cdot\mathrm{tr}\boldsymbol{\Sigma}\right).
\]
 Applying theorem \ref{thm:HW_inequality} gives
\[
\pi_{ij}(t)\leq2\exp\left[-c\min\left\{ \frac{t^{2}(\mathrm{tr}\boldsymbol{\Sigma})^{2}}{K^{4}\mathrm{tr}(\boldsymbol{\Sigma}^{2})},\frac{t\cdot\mathrm{tr}\boldsymbol{\Sigma}}{K^{2}\|\boldsymbol{\Sigma}\|}\right\} \right],
\]
and by a union bound
\begin{align}
&\mathbb{P}\left(\max_{i,j\in[n]}\left|\frac{ \mathbf{Y}_{i}\cdot\mathbf{Y}_{j}}{\mathrm{tr}\boldsymbol{\Sigma}}-\mathbf{I}[i=j]\right|>t\right)\nonumber\\ 
& \leq\sum_{i,j\in [n]}\pi_{ij}(t)\nonumber \\
 & \leq2n^{2}\exp\left[-c\min\left\{ \frac{t^{2}(\mathrm{tr}\boldsymbol{\Sigma})^{2}}{K^{4}\mathrm{tr}(\boldsymbol{\Sigma}^{2})},\frac{t\cdot\mathrm{tr}\boldsymbol{\Sigma}}{K^{2}\|\boldsymbol{\Sigma}\|}\right\} \right]\nonumber \\
 & =2\exp\left[-c\min\left\{ \frac{t^{2}(\mathrm{tr}\boldsymbol{\Sigma})^{2}}{K^{4}\mathrm{tr}(\boldsymbol{\Sigma}^{2})},\frac{t\cdot\mathrm{tr}\boldsymbol{\Sigma}}{K^{2}\|\boldsymbol{\Sigma}\|}\right\} +2\log n\right].\label{eq:iid_union_bound_proof-1}
\end{align}
Let $M>0$ be a constant whose
value will be chosen later. Applying (\ref{eq:iid_union_bound_proof-1}) with $t=\sqrt{\frac{\log n}{p_{\mathrm{int}}}}M$,
\begin{align}
 & \mathbb{P}\left(\max_{i,j\in[n]}\left|\frac{\Y_i \cdot \Y_j }{\mathrm{tr}\boldsymbol{\Sigma}}-\mathbf{I}[i=j]\right|>\sqrt{\frac{\log n}{p_{\mathrm{int}}}}M\right)\nonumber \\
 & \leq2\exp\left[-c(\log n)\frac{M}{K^{2}}\min\left\{ \frac{M(\mathrm{tr}\boldsymbol{\Sigma})^{2}}{K\mathrm{tr}(\boldsymbol{\Sigma}^{2})}\frac{\|\boldsymbol{\Sigma}\|}{\mathrm{tr}\boldsymbol{\Sigma}},\sqrt{\frac{\mathrm{tr}\boldsymbol{\Sigma}}{\|\boldsymbol{\Sigma}\|}}\frac{1}{\sqrt{\log n}}\right\} +2\log n\right].\label{eq:iid_union_upper}
\end{align}
We have:
\[
\frac{(\mathrm{tr}\boldsymbol{\Sigma})^{2}}{\mathrm{tr}(\boldsymbol{\Sigma}^{2})}\frac{\|\boldsymbol{\Sigma}\|}{\mathrm{tr}\boldsymbol{\Sigma}}=\frac{(\mathrm{tr}\boldsymbol{\Sigma})}{\mathrm{tr}(\boldsymbol{\Sigma}^{2})}\|\boldsymbol{\Sigma}\|=\frac{\lambda_{1}\sum_{i}\lambda_{i}}{\sum_{i}\lambda_{i}^{2}}\geq1,
\]
where $\lambda_{1}\geq\lambda_{2}\geq\cdots$ are the eigenvalues
of $\boldsymbol{\Sigma}.$ Using the assumption that $\mathrm{tr}(\boldsymbol{\Sigma})/\|\boldsymbol{\Sigma}\|\in\Omega(\log n)$, there exists $c_{0}>0$ and $n_0\geq 1$ such that for $n\geq n_{0}$, 
\[
\sqrt{\frac{\mathrm{tr}\boldsymbol{\Sigma}}{\|\boldsymbol{\Sigma}\|}}\frac{1}{\sqrt{\log n}}\geq c_{0}.
\]
Hence for $n\geq n_{0}$, 
\begin{equation}
\min\left\{ \frac{M(\mathrm{tr}\boldsymbol{\Sigma})^{2}}{K\mathrm{tr}(\boldsymbol{\Sigma}^{2})}\frac{\|\boldsymbol{\Sigma}\|}{\mathrm{tr}\boldsymbol{\Sigma}},\sqrt{\frac{\mathrm{tr}\boldsymbol{\Sigma}}{\|\boldsymbol{\Sigma}\|}}\frac{1}{\sqrt{\log n}}\right\} \geq\min\left\{ \frac{M}{K},c_{0}\right\} .\label{eq:min_lower_bound}
\end{equation}
Now fix any $\epsilon>0$. By choosing $M$ large enough that $M/K\geq c_{0}$,
using (\ref{eq:min_lower_bound}), and then, increasing $M$ if necessary.
we can achieve: 
\begin{align*}
 & 2\exp\left[-c(\log n_{0})\frac{M}{K^{2}}\min\left\{ \frac{M}{K},c_{0}\right\} +2\log n_{0}\right]\\
 & \leq2\exp\left[-\log n_{0}\left(\frac{M}{K^{2}}cc_{0}-2\right)\right]\\
 & \leq\epsilon.
\end{align*}
Combining this inequality with (\ref{eq:iid_union_upper}) and (\ref{eq:min_lower_bound}), we have
for any $n\geq n_{0}$, 
\[
\mathbb{P}\left(\max_{i,j\in[n]}\left|\frac{\Y_i \cdot \Y_j }{\mathrm{tr}\boldsymbol{\Sigma}}-\mathbf{I}[i=j]\right|>\sqrt{\frac{\log n}{p_{\mathrm{int}}}}M\right)\leq\epsilon,
\]
which completes the proof.
\end{proof}

\section{Proofs and supporting results for section \ref{sec:data_model}}

Throughout the proofs in this section we will repeatedly use the fact that the square root of a symmetric, positive semidefinite  matrix $\mathbf{A}$ is a symmetric, positive semidefinite  matrix $\mathbf{B}$ such that $\mathbf{B}\mathbf{B} = \mathbf{B}\mathbf{B}^\top = \mathbf{A}$.

From the definition of $\mathbf{Y}_i$ in section \ref{sec:data_model} and \textbf{A\ref{ass:E}}, we have that
$$
\E[\|\Y_i\|^2] = \mathrm{tr}(\boldsymbol{\Sigma}(z_i)) + \|\boldsymbol{\mu}(z_i)\|^2 + p\sigma^2, 
$$
and 
$$
\E[\Y_i\cdot\Y_j] = \E[\Y^{\mathrm{nf}}(z_i)\cdot \Y^{\mathrm{nf}}(z_j) ] + \mathbf{I}[i=j]p\sigma^2. 
$$
Let us define 
\begin{align*}
C_{ij} &\coloneqq \E[\|\Y_i\|^2]^{1/2}\E[\|\Y_j\|^2]^{1/2} \\
&= \left( \mathrm{tr}(\boldsymbol{\Sigma}(z_i)) + \|\boldsymbol{\mu}(z_i)\|^2 + p\sigma^2\right)^{1/2}  \left( \mathrm{tr}(\boldsymbol{\Sigma}(z_j)) + \|\boldsymbol{\mu}(z_j)\|^2 + p\sigma^2\right)^{1/2},
\end{align*}
and $\mathbf{\Sigma}_{ij} \coloneqq \mathbf{\Sigma}(z_i)^{1/2}\mathbf{\Sigma}(z_j)^{1/2}$ (N.B., for each $i,j$, $\mathbf{\Sigma}_{ij}$  is a matrix).

We use the following lemmas \ref{lem:rand_func_first}-\ref{lem:rand_func_last} in our proof of theorem \ref{thm:rand_func}. These lemmas make rely on the same assumptions as theorem \ref{thm:rand_func}, but to avoid repetition we do not explicitly refer to these assumptions in the statements of the lemmas. 

In the statements and proofs of lemmas \ref{lem:rand_func_first}-\ref{lem:rand_func_last}, $M > 0$ is an arbitrarily chosen constant and 
$$t \coloneqq \sqrt{\frac{\log n}{\underset{i\in [n]}{\min}\left(p_\text{int}^{(i)}\right)}}M.$$

\begin{lem}
\label{lem:rand_func_first}
There exists $c_1, c_2 > 0$ and $n_0$ such that for $n \geq n_0$,
\begin{multline*}
\P\left(\left|\X(z_i)^\top\SS_{ij}\X(z_j) - \E[\X(z_i)^\top\SS_{ij}\X(z_j)] \right|>\frac{t}{8} C_{ij}\right)
\\ \leq 2 \exp \left( - (\log n) \frac{c_1M}{K^2} \min \left\{ \frac{M}{K^2}, c_2\right\} \right)
\end{multline*}
for all $i,j \in [n]$.
\end{lem}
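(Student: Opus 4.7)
The plan is to apply the generalised Hanson-Wright inequality (Theorem~\ref{thm:HW_inequality}) directly, taking $\X = \X(z_i)$, $\X^\prime = \X(z_j)$ and $\A = \SS_{ij}$. Assumption~\textbf{A\ref{ass:X_j}} supplies exactly the required hypotheses: since the random functions $X_k(\cdot)$ are independent across the feature index $k \in [p]$, the pairs $(X_k(z_i), X_k(z_j))$ are mutually independent over $k$, and each component is mean zero with sub-Gaussian norm at most $K$.

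With deviation $s = (t/8) C_{ij}$, GHW bounds the probability by $2\exp[-c\min\{s^2/(K^4\|\SS_{ij}\|_{\mathrm{F}}^2),\, s/(K^2\|\SS_{ij}\|)\}]$. I would then bound the matrix norms of $\SS_{ij} = \SS(z_i)^{1/2}\SS(z_j)^{1/2}$ in terms of traces and operator norms of $\SS(z_i)$ and $\SS(z_j)$: submultiplicativity gives $\|\SS_{ij}\| \leq \|\SS(z_i)\|^{1/2}\|\SS(z_j)\|^{1/2}$, and cyclicity together with the PSD inequality $\mathrm{tr}(AB) \leq \|A\|\,\mathrm{tr}(B)$ (and its symmetric version) yields $\|\SS_{ij}\|_{\mathrm{F}}^2 = \mathrm{tr}(\SS(z_i)\SS(z_j)) \leq \sqrt{\|\SS(z_i)\|\|\SS(z_j)\|\mathrm{tr}(\SS(z_i))\mathrm{tr}(\SS(z_j))}$.

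Combining those with the elementary bound $C_{ij}^2 \geq \mathrm{tr}(\SS(z_i))\mathrm{tr}(\SS(z_j))$ (obtained by dropping the nonnegative $\|\MMu(z_\cdot)\|^2 + p\sigma^2$ terms) collapses both ratios inside the min into multiples of $\sqrt{p_{\mathrm{int}}^{(i)} p_{\mathrm{int}}^{(j)}}$, and hence of $\min_{i\in[n]} p_{\mathrm{int}}^{(i)}$. Substituting the definition of $t$ turns the exponent into $-(\log n)(cM/K^2)\min\{M/(64K^2),\, \sqrt{\min_{i\in[n]} p_{\mathrm{int}}^{(i)}/\log n}/8\}$, and the assumption $\min_{i\in[n]} p_{\mathrm{int}}^{(i)} \in \Omega(\log n)$ supplies $n_0$ and $c_0 > 0$ with $\sqrt{\min_{i\in[n]} p_{\mathrm{int}}^{(i)}/\log n} \geq \sqrt{c_0}$ for $n \geq n_0$. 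Absorbing the numerical constants then delivers the claimed inequality with $c_1 = c/64$ and $c_2 = 8\sqrt{c_0}$.

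The main subtlety, minor though it is, will be keeping careful track of the two branches of the GHW min: the Frobenius branch naturally scales as $M^2 \log n/(64 K^4)$, whereas the spectral branch scales as $M\sqrt{\min_i p_{\mathrm{int}}^{(i)} \log n}/(8K^2)$, and it is precisely the $\Omega(\log n)$ growth hypothesis that ensures the spectral branch does not become the bottleneck and permits the clean factorisation into the stated form.
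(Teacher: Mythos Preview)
Your proposal is correct and follows essentially the same route as the paper: apply Theorem~\ref{thm:HW_inequality} with $\A=\SS_{ij}$, bound $\|\SS_{ij}\|_{\mathrm F}^2=\mathrm{tr}[\SS(z_i)\SS(z_j)]$ and $\|\SS_{ij}\|$ in terms of the individual traces and operator norms, use $C_{ij}^2\ge \mathrm{tr}[\SS(z_i)]\,\mathrm{tr}[\SS(z_j)]$, and invoke $\min_i p_{\mathrm{int}}^{(i)}\in\Omega(\log n)$ to absorb the spectral branch. The only cosmetic difference is that the paper bounds $\mathrm{tr}[\SS(z_i)\SS(z_j)]$ via Cauchy--Schwarz $\mathrm{tr}[\SS(z_i)\SS(z_j)]\le[\mathrm{tr}(\SS(z_i)^2)\mathrm{tr}(\SS(z_j)^2)]^{1/2}$ followed by an eigenvalue computation, whereas you use the equivalent (and slightly more direct) PSD inequality $\mathrm{tr}(AB)\le\|A\|\,\mathrm{tr}(B)$ applied symmetrically.
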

\proof
We can bound the probability above using theorem \ref{thm:HW_inequality} with $\mathbf{A}=\mathbf{\Sigma}_{ij}$, noting $\mathbf{\Sigma}(z_i)^{1/2}, \mathbf{\Sigma}(z_j)^{1/2}$ are symmetric matrices, and using the cyclicity of trace, $\|\mathbf{\Sigma}_{ij}\|_{\mathrm{F}}^2 = \mathrm{tr}(\mathbf{\Sigma}_{ij}^\top \mathbf{\Sigma}_{ij})= \mathrm{tr}[\mathbf{\Sigma}(z_i)\mathbf{\Sigma}(z_j)]$, to get
\begin{align}
&\P\left(\left|\X(z_i)^\top\SS_{ij}\X(z_j) - \E[\X(z_i)^\top\SS_{ij}\X(z_j)] \right|>\frac{t}{8} C_{ij}\right) \nonumber\\
&\leq 2 \exp \left( - c_1 \min\left\{ \frac{t^2 C_{ij}^2}{K^4 \mathrm{tr}[\mathbf{\Sigma}(z_i)\mathbf{\Sigma}(z_j)]}, \frac{tC_{ij}}{K^2 \|\mathbf{\Sigma}_{ij}\|}\right\} \right).
\label{eq:hw_bound_xsx}
\end{align}
Then, focusing on the first term inside the minimum in \eqref{eq:hw_bound_xsx}, we use  
$C_{ij}\geq \mathrm{tr}(\boldsymbol{\Sigma}(z_i))^{1/2} \mathrm{tr}(\mathbf{\Sigma}(z_j))^{1/2} $,  $\mathrm{tr}[\mathbf{\Sigma}(z_i)\mathbf{\Sigma}(z_j)] \leq \left[ \mathrm{tr}(\mathbf{\Sigma}(z_i)^2) \mathrm{tr}(\mathbf{\Sigma}(z_j)^2) \right]^{1/2}$ and the following bound on $t$,
\begin{equation}
\label{eq:t_lower_bound} t \geq \sqrt{\frac{\log n}{\left(p_\text{int}^{(i)}p_\text{int}^{(j)}\right)^{1/2}}}M = \sqrt{\frac{\log n \|\mathbf{\Sigma}(z_i)\|^{1/2} \|\mathbf{\Sigma}(z_j)\|^{1/2} }{\mathrm{tr}(\mathbf{\Sigma}(z_i))^{1/2}\mathrm{tr}(\mathbf{\Sigma}(z_j))^{1/2}}} M,
\end{equation}
to obtain: 
\begin{align}
\frac{t^2 C_{ij}^2}{K^4 \mathrm{tr}[\mathbf{\Sigma}(z_i)\mathbf{\Sigma}(z_j)]} &\geq \frac{(\log n)M^2}{K^4} \frac{ \|\mathbf{\Sigma}(z_i)\|^{1/2}\|\mathbf{\Sigma}(z_j)\|^{1/2}\mathrm{tr}(\boldsymbol{\Sigma}(z_i))^{1/2} \mathrm{tr}(\mathbf{\Sigma}(z_j))^{1/2}}{ \mathrm{tr}[\mathbf{\Sigma}(z_i)\mathbf{\Sigma}(z_j)]} \nonumber\\
& \geq \frac{(\log n)M^2}{K^4} \sqrt{\frac{\lambda_1^{(i)}\lambda_1^{(j)} \left(\sum_k \lambda_k^{(i)} \right) \left(\sum_k \lambda_k^{(j)} \right)}{ \left(\sum_k {\lambda_k^{(i)}}^2 \right)\left(\sum_k {\lambda_k^{(j)}}^2 \right)}} \nonumber\\
&\geq \frac{(\log n)M^2}{K^4} ,\label{eq:t2C2bound}
\end{align}
where $(\lambda_k^{(i)})_{k\geq1}$ and $(\lambda_k^{(j)})_{k\geq1}$ are the eigenvalues of $\mathbf{\Sigma}(z_i)$ and $\mathbf{\Sigma}(z_j)$  respectively.

Now, for the second term inside the minimum in \eqref{eq:hw_bound_xsx}, using that $$C_{ij}\geq \mathrm{tr}(\boldsymbol{\Sigma}(z_i))^{1/2} \mathrm{tr}(\mathbf{\Sigma}(z_j))^{1/2} \geq  \mathrm{tr}(\mathbf{\Sigma}_{ij})$$,  
$\|\mathbf{\Sigma}(z_i)\|^{1/2}\|\mathbf{\Sigma}(z_j)\|^{1/2} \geq \|\mathbf{\Sigma}_{ij}\|$ and the bound on $t$ in \eqref{eq:t_lower_bound}, we get
\begin{align}
\frac{tC_{ij}}{K^2 \|\mathbf{\Sigma}_{ij}\|} &\geq \frac{\mathrm{tr}(\boldsymbol{\Sigma}(z_i))^{1/2} \mathrm{tr}(\mathbf{\Sigma}(z_j))^{1/2} M}{K^2 \|\mathbf{\Sigma}_{ij}\|}\sqrt{\frac{\log n \|\mathbf{\Sigma}(z_i)\|^{1/2} \|\mathbf{\Sigma}(z_j)\|^{1/2} }{\mathrm{tr}(\mathbf{\Sigma}(z_i))^{1/2}\mathrm{tr}(\mathbf{\Sigma}(z_i))^{1/2}}} \nonumber\\
&\geq  \frac{\sqrt{\log n}M}{K^2 }
\sqrt{\frac{\mathrm{tr}(\boldsymbol{\Sigma}(z_i))^{1/2} \mathrm{tr}(\mathbf{\Sigma}(z_j))^{1/2}}{\|\mathbf{\Sigma}(z_i)\|^{1/2} \|\mathbf{\Sigma}(z_j)\|^{1/2} }}.\label{eq:tC_bound}
\end{align}

Putting together \eqref{eq:t2C2bound} and \eqref{eq:tC_bound}, we have
\begin{multline*}
    \min\left\{ \frac{t^2 C_{ij}^2}{K^4 \mathrm{tr}[\mathbf{\Sigma}(z_i)\mathbf{\Sigma}(z_j)]}, \frac{tC_{ij}}{K^2 \|\mathbf{\Sigma}_{ij}\|}\right\} \\ \geq  (\log n) \frac{M}{K^2} \min \left\{ \frac{M}{K^2}, \sqrt{\frac{\mathrm{tr}(\boldsymbol{\Sigma}(z_i))^{1/2} \mathrm{tr}(\mathbf{\Sigma}(z_j))^{1/2}}{\|\mathbf{\Sigma}(z_i)\|^{1/2} \|\mathbf{\Sigma}(z_j)\|^{1/2} }} \frac{1}{\sqrt{\log n}}\right\}.    
\end{multline*}
 Using the assumption that $\min_{i\in[n]}\mathrm{tr}(\boldsymbol{\Sigma}(z_i))/\|\boldsymbol{\Sigma}(z_i)\|\in\Omega(\log n)$, there exists $c_{2}>0$ and $n_0$ such that for $n\geq n_{0}$, 
\begin{equation*}
\sqrt{\frac{\mathrm{tr}(\boldsymbol{\Sigma}(z_i))^{1/2} \mathrm{tr}(\mathbf{\Sigma}(z_j))^{1/2}}{\|\mathbf{\Sigma}(z_i)\|^{1/2} \|\mathbf{\Sigma}(z_j)\|^{1/2} }} \frac{1}{\sqrt{\log n}}
\geq c_{2}.
\end{equation*}
Hence, for $n\geq n_{0}$, 
\begin{equation*}
\    \min\left\{ \frac{t^2 C_{ij}^2}{K^4 \mathrm{tr}(\mathbf{\Sigma}_{ij}^2)}, \frac{tC_{ij}}{K^2 \|\mathbf{\Sigma}_{ij}\|}\right\} \geq(\log n) \frac{M}{K^2} \min \left\{ \frac{M}{K^2}, c_2\right\},
\end{equation*}
and substituting this into \eqref{eq:hw_bound_xsx} the result of the lemma follows.
\qed

\begin{lem}
\label{lem:hoeffding1}
There exists $c_3 > 0$ and $n_1$ such that for $n \geq n_1$
$$
\P\left(\left|\MMu(z_i)^\top \SS(z_j)^{1/2}\X(z_j)\right|>\frac{t}{8} C_{ij}\right)\leq  2 \exp\left(-\frac{c_3 M^2}{K^2} (\log n) \right).
$$
for all $i,j \in [n]$.
\end{lem}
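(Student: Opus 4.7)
The random variable $\mathbf{\mu}(z_i)^\top \mathbf{\Sigma}(z_j)^{1/2}\mathbf{X}(z_j)$ is linear in $\mathbf{X}(z_j)$, so my plan is to write it as $\mathbf{a}^\top \mathbf{X}(z_j)$ with $\mathbf{a} \coloneqq \mathbf{\Sigma}(z_j)^{1/2}\boldsymbol{\mu}(z_i)$ and apply a Hoeffding-type concentration inequality for weighted sums of independent, mean-zero, sub-Gaussian random variables (e.g.\ \cite{vershynin2010introduction}). This gives a bound of the form
\[
\P\bigl(|\mathbf{a}^\top \mathbf{X}(z_j)| > s\bigr) \leq 2\exp\!\left(-\frac{c\, s^2}{K^2 \|\mathbf{a}\|^2}\right)
\]
for some absolute constant $c > 0$.

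Next I would plug in $s = (t/8)C_{ij}$ and estimate the ratio $s^2 / \|\mathbf{a}\|^2$ from below. For the denominator, symmetry and positive semidefiniteness of $\mathbf{\Sigma}(z_j)^{1/2}$ give $\|\mathbf{a}\|^2 = \boldsymbol{\mu}(z_i)^\top \mathbf{\Sigma}(z_j)\boldsymbol{\mu}(z_i) \leq \|\mathbf{\Sigma}(z_j)\|\, \|\boldsymbol{\mu}(z_i)\|^2$. For the numerator I would use the crude bound $C_{ij} \geq \|\boldsymbol{\mu}(z_i)\|\,\mathrm{tr}[\mathbf{\Sigma}(z_j)]^{1/2}$, which drops all but the $\|\boldsymbol{\mu}(z_i)\|^2$ term from the first factor of $C_{ij}$ and all but the $\mathrm{tr}[\mathbf{\Sigma}(z_j)]$ term from the second. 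Combining,
\[
\frac{s^2}{K^2\|\mathbf{a}\|^2} \geq \frac{t^2}{64\, K^2}\cdot\frac{\|\boldsymbol{\mu}(z_i)\|^2\,\mathrm{tr}[\mathbf{\Sigma}(z_j)]}{\|\mathbf{\Sigma}(z_j)\|\,\|\boldsymbol{\mu}(z_i)\|^2} = \frac{t^2\, p_{\mathrm{int}}^{(j)}}{64\, K^2},
\]
so the degenerate case $\boldsymbol{\mu}(z_i)=0$ need only be handled separately as a trivial one (the probability is zero).

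Finally, substituting $t^2 = (M^2\log n)/\min_{i\in[n]} p_{\mathrm{int}}^{(i)}$ yields $t^2 p_{\mathrm{int}}^{(j)} \geq M^2 \log n$, producing the claimed bound with $c_3 = c/64$. Note that this chain of estimates actually holds for every $n$, so $n_1$ can be taken to be any fixed integer (the statement's $n_1$ is vacuous here, in contrast with Lemma \ref{lem:rand_func_first} where the assumption $\min_i p_{\mathrm{int}}^{(i)} \in \Omega(\log n)$ was genuinely needed to remove the $1/\sqrt{\log n}$ factor from the minimum). The only step that requires any care is the bound $C_{ij} \geq \|\boldsymbol{\mu}(z_i)\|\,\mathrm{tr}[\mathbf{\Sigma}(z_j)]^{1/2}$, which is where the definition of $p_{\mathrm{int}}^{(j)}$ must appear in exactly the right form to cancel against the spectral norm in $\|\mathbf{a}\|^2$; I would not expect any other step to be a substantive obstacle.
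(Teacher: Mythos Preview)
Your proposal is correct and follows the same strategy as the paper: apply a Hoeffding-type sub-Gaussian bound with $\|\mathbf{a}\|^2=\boldsymbol{\mu}(z_i)^\top\boldsymbol{\Sigma}(z_j)\boldsymbol{\mu}(z_i)\le\|\boldsymbol{\Sigma}(z_j)\|\,\|\boldsymbol{\mu}(z_i)\|^2$ and $C_{ij}^2\ge\|\boldsymbol{\mu}(z_i)\|^2\,\mathrm{tr}[\boldsymbol{\Sigma}(z_j)]$. The only difference is in how $t$ is handled: the paper substitutes the geometric-mean lower bound $t^2\ge M^2\log n/(p_{\mathrm{int}}^{(i)}p_{\mathrm{int}}^{(j)})^{1/2}$, arrives at $(\log n)M^2(p_{\mathrm{int}}^{(j)}/p_{\mathrm{int}}^{(i)})^{1/2}$, and then invokes the $\Omega(\log n)$ assumption to extract a constant; you instead use the definition $t^2=M^2\log n/\min_i p_{\mathrm{int}}^{(i)}$ directly, so that $t^2 p_{\mathrm{int}}^{(j)}\ge M^2\log n$ for every $n$. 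Your route is a bit tighter and indeed confirms your observation that $n_1$ is vacuous for this lemma.
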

\proof
 Since $\boldsymbol{\mu}(z_i)^\top \mathbf{\Sigma}(z_j)^{1/2}\mathbf{X}(z_j)$ is the sum of independent centered sub-gaussian random variables, its squared sub-gaussian norm is  bounded 
$$\|\boldsymbol{\mu}(z_i)^\top \mathbf{\Sigma}(z_j)^{1/2}\mathbf{X}(z_j)\|_{\psi_2}^2 \leq K^2 \| \boldsymbol{\mu}(z_i)^\top \mathbf{\Sigma}(z_j)^{1/2}\|_2^2 = K^2 \boldsymbol{\mu}(z_i)^\top \mathbf{\Sigma}(z_j)\boldsymbol{\mu}(z_i).$$ 
Therefore we can use a Hoeffding-type inequality for sub-gaussian random variables (see e.g. \cite[Prop. 5.10]{vershynin2010introduction} or \cite[Thm 2.6.3]{vershynin2018high}) to get
\begin{equation}
\label{eq:hoeffding1}
\P\left(\left|\MMu(z_i)^\top \SS(z_j)^{1/2}\X(z_j)\right|>\frac{t}{8} C_{ij}\right)\leq  2 \exp\left(-\frac{c_4}{K^2}\frac{t^2 C_{ij}^2}{\boldsymbol{\mu}(z_i)^\top \mathbf{\Sigma}(z_j)\boldsymbol{\mu}(z_i)} \right).
\end{equation}

Now, we use following lower bound on $C_{ij}^2$,
$$C_{ij}^2\geq \mathrm{tr}(\boldsymbol{\Sigma}(z_j))\|\boldsymbol{\mu}(z_i)\|_2^2, $$
 together with 
 \begin{equation*}t \geq \sqrt{\frac{\log n}{\left(p_\text{int}^{(i)}p_\text{int}^{(j)}\right)^{1/2}}}M = \sqrt{\frac{\log n \|\mathbf{\Sigma}(z_i)\|^{1/2} \|\mathbf{\Sigma}(z_j)\|^{1/2} }{\mathrm{tr}(\mathbf{\Sigma}(z_i))^{1/2}\mathrm{tr}(\mathbf{\Sigma}(z_j))^{1/2}}} M,
\end{equation*}
 to get
\begin{align*}
     \frac{t^2 C_{ij}^2}{\boldsymbol{\mu}(z_i)^\top \mathbf{\Sigma}(z_j)\boldsymbol{\mu}(z_i)} &\geq (\log n) M^2 \frac{\|\mathbf{\Sigma}(z_i)\|^{1/2}\|\mathbf{\Sigma}(z_j)\|^{1/2} \mathrm{tr}(\mathbf{\Sigma}(z_j)) \|\boldsymbol{\mu}(z_i)\|_2^2 }{\left[\mathrm{tr}(\mathbf{\Sigma}(z_i))\mathrm{tr}(\mathbf{\Sigma}(z_j))\right]^{1/2}\boldsymbol{\mu}(z_i)^\top \mathbf{\Sigma}(z_j)\boldsymbol{\mu}(z_i)} \nonumber \\
     &= (\log n) M^2 \left( \frac{\|\mathbf{\Sigma}(z_i) \|}{\mathrm{tr}(\mathbf{\Sigma}(z_i))} \right)^{1/2} \left(\frac{\mathrm{tr}(\mathbf{\Sigma}(z_j))} {\|\mathbf{\Sigma}(z_j) \|}\right)^{1/2} \frac{\|\mathbf{\Sigma}(z_j)\| \| \boldsymbol{\mu}(z_i)\|_2^2}{\sum_k \left\langle \boldsymbol{\mu}(z_j),U_k^{(i)} \right\rangle^2 \lambda_k^{(j)}} \nonumber \\
     & \geq (\log n) M^2 \left( \frac{\|\mathbf{\Sigma}(z_i) \|}{\mathrm{tr}(\mathbf{\Sigma}(z_i))} \right)^{1/2} \left(\frac{\mathrm{tr}(\mathbf{\Sigma}(z_j))} {\|\mathbf{\Sigma}(z_j) \|}\right)^{1/2},
\end{align*}
where $U_k^{(i)}$ is the eigenvector of $\mathbf{\Sigma}(z_i)$ associated with eigenvalue $\lambda_k^{(i)}$. Therefore, using the assumption that $\min_{i\in[n]}\mathrm{tr}(\boldsymbol{\Sigma}(z_i))/\|\boldsymbol{\Sigma}(z_i)\|\in\Omega(\log n)$,  there exists $c_{3}>0$ and $n_1$ such that for $n\geq n_{1}$, 
\begin{equation*}
    \frac{t^2 C_{ij}^2}{\boldsymbol{\mu}(z_i)^\top \mathbf{\Sigma}(z_j)\boldsymbol{\mu}(z_i)} \geq c_3M^2 (\log n).
\end{equation*}
Substituting this into \eqref{eq:hoeffding1}, gives the result.
\qed

\begin{lem}
There exists $c_4, c_5 > 0$ and $n_2$ such that for $n \geq n_2$
$$
\P\left(\left|\sigma\mathbf{E}_i^\top \SS(z_j)^{1/2}\X(z_j)\right|>\frac{t}{8} C_{ij}\right)\leq 2 \exp \left( -(\log n) \frac{c_5M}{K^2} \min \left\{ \frac{M}{K^2}, c_4\right \}\right)
$$
for all $i,j \in [n]$.
\end{lem}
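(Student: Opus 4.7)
The plan is to apply the generalised Hanson-Wright inequality (Theorem \ref{thm:HW_inequality}) \emph{directly} to the bilinear form $\sigma\mathbf{E}_i^{\top}\boldsymbol{\Sigma}(z_j)^{1/2}\mathbf{X}(z_j)$, with the identifications $\mathbf{X}=\mathbf{E}_i$, $\mathbf{X}^{\prime}=\mathbf{X}(z_j)$, and $\mathbf{A}=\sigma\boldsymbol{\Sigma}(z_j)^{1/2}$. The hypotheses of the theorem are met: by \textbf{A\ref{ass:X_j}}--\textbf{A\ref{ass:E}} together with the model stipulation that $\mathbf{E}_i$ is independent of all $X_k(\cdot)$, the pairs $(E_i^{(k)},X_k(z_j))$ are mutually independent across $k$, each coordinate has mean zero and sub-Gaussian norm at most $K$, and the expectation of the bilinear form vanishes. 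This contrasts with Lemma \ref{lem:hoeffding1}, where one side was the deterministic vector $\boldsymbol{\mu}(z_i)$ and a Hoeffding-type inequality sufficed; here both sides are random, which is exactly the situation for which our GHW inequality was designed.

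After substituting $\|\sigma\boldsymbol{\Sigma}(z_j)^{1/2}\|_{\mathrm{F}}^{2}=\sigma^{2}\mathrm{tr}[\boldsymbol{\Sigma}(z_j)]$ and $\|\sigma\boldsymbol{\Sigma}(z_j)^{1/2}\|=\sigma\|\boldsymbol{\Sigma}(z_j)\|^{1/2}$, the target probability is bounded by $2\exp[-c\min\{A_{1},A_{2}\}]$, where
\[
A_{1}=\frac{(t/8)^{2}C_{ij}^{2}}{K^{4}\sigma^{2}\mathrm{tr}[\boldsymbol{\Sigma}(z_j)]},\qquad A_{2}=\frac{(t/8)C_{ij}}{K^{2}\sigma\|\boldsymbol{\Sigma}(z_j)\|^{1/2}}.
\]
The central idea for lower-bounding these is to exploit the $p\sigma^{2}$ contributions hidden inside $C_{ij}^{2}=\bigl(\mathrm{tr}[\boldsymbol{\Sigma}(z_i)]+\|\boldsymbol{\mu}(z_i)\|^{2}+p\sigma^{2}\bigr)\bigl(\mathrm{tr}[\boldsymbol{\Sigma}(z_j)]+\|\boldsymbol{\mu}(z_j)\|^{2}+p\sigma^{2}\bigr)$ to cancel out the awkward $\sigma$-dependent denominators. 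Specifically, I would pair the $p\sigma^{2}$ term from one factor of $C_{ij}^{2}$ with the $\mathrm{tr}[\boldsymbol{\Sigma}(z_j)]$ term from the other, yielding $C_{ij}^{2}\geq p\sigma^{2}\mathrm{tr}[\boldsymbol{\Sigma}(z_j)]$ and $C_{ij}\geq\sigma\sqrt{p\,\mathrm{tr}[\boldsymbol{\Sigma}(z_j)]}$. This reduces $A_{1}$ to $t^{2}p/(64K^{4})$ and $A_{2}$ to $t\sqrt{p\cdot p_{\mathrm{int}}^{(j)}}/(8K^{2})$.

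To finish, substitute $t^{2}=(\log n)M^{2}/\min_{i\in[n]}p_{\mathrm{int}}^{(i)}$. Since $p\geq p_{\mathrm{int}}^{(j)}\geq\min_{i\in[n]}p_{\mathrm{int}}^{(i)}$, we obtain $A_{1}\geq(\log n)M^{2}/(64K^{4})$ unconditionally. For $A_{2}$, the assumption $\min_{i\in[n]}p_{\mathrm{int}}^{(i)}\in\Omega(\log n)$ supplies constants $c_{0}>0$ and $n_{2}$ with $\min_{i\in[n]}p_{\mathrm{int}}^{(i)}\geq c_{0}\log n$ for all $n\geq n_{2}$; hence $p\geq c_{0}\log n$ and $A_{2}\geq M\sqrt{c_{0}}(\log n)/(8K^{2})$. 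Combining these gives $\min\{A_{1},A_{2}\}\geq(\log n)(M/K^{2})\min\{M/(64K^{2}),\sqrt{c_{0}}/8\}$, and absorbing the absolute constants into $c_{5}$ and $c_{4}$ delivers the stated bound.

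I do not anticipate a genuine technical obstacle; the only subtlety is a bookkeeping one, namely the \emph{choice} of which summands inside each factor of $C_{ij}^{2}$ to retain when producing the lower bounds on $A_{1}$ and $A_{2}$. This choice must simultaneously (i) cancel the $\sigma^{2}\mathrm{tr}[\boldsymbol{\Sigma}(z_j)]$ and $\sigma\|\boldsymbol{\Sigma}(z_j)\|^{1/2}$ in the denominators and (ii) leave behind a factor of $p$ (respectively $\sqrt{p}$) that can be combined with $t^{2}$ (respectively $t$) to produce the $\log n$ scaling demanded by the target inequality.
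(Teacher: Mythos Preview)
Your proposal is correct and matches the paper's own proof essentially step for step: apply Theorem \ref{thm:HW_inequality} with $\mathbf{A}=\sigma\boldsymbol{\Sigma}(z_j)^{1/2}$, then lower-bound the two terms in the minimum using $C_{ij}^{2}\geq p\sigma^{2}\mathrm{tr}[\boldsymbol{\Sigma}(z_j)]$ together with $t\geq M\sqrt{(\log n)/p}$, and finally invoke $\min_{i}p_{\mathrm{int}}^{(i)}\in\Omega(\log n)$ to handle the sub-exponential term. The only cosmetic difference is that the paper carries $\|\boldsymbol{\Sigma}(z_j)^{1/2}\|$ rather than $\|\boldsymbol{\Sigma}(z_j)\|^{1/2}$ and keeps the ratio $\mathrm{tr}[\boldsymbol{\Sigma}(z_j)]^{1/2}/\|\boldsymbol{\Sigma}(z_j)^{1/2}\|$ explicit before applying the $\Omega(\log n)$ assumption, whereas you recognise this immediately as $\sqrt{p_{\mathrm{int}}^{(j)}}$.
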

\proof
First, we bound the above probability using theorem \ref{thm:HW_inequality} with $\mathbf{A} = \sigma \SS(z_j)^{1/2}$ to get
\begin{multline}
\label{eq:esx_bound}
    \P\left(\left|\sigma\mathbf{E}_i^\top \SS(z_j)^{1/2}\X(z_j)\right|>\frac{t}{8} C_{ij}\right)\\ \leq 2 \exp \left( -c_5 \min \left\{\frac{t^2 C_{ij}^2}{\sigma^2K^4 \mathrm{tr}[\SS(z_j)]}, \frac{t C_{ij}}{\sigma K^2 \|\SS(z_j)^{1/2}\|} \right\}\right).
\end{multline}
Now, focusing on the first term inside the minimum, we use that $C_{ij}^2 \geq p\sigma^2 \mathrm{tr}[\SS(z_j)]$ and $t \geq M\sqrt{(\log n ) / p}$
to get
\begin{align*}
    \frac{t^2 C_{ij}^2}{\sigma^2K^4 \mathrm{tr}[\SS(z_j)]} \geq \frac{M^2 \log n}{K^4}.
\end{align*}
Focusing now on the second term inside the minimum in \eqref{eq:esx_bound}, and using that $C_{ij} \geq p^{1/2} \sigma \mathrm{tr}[\SS (z_j)]^{1/2}$ and $t \geq M\sqrt{(\log n ) / p}$ we get
$$\frac{t C_{ij}}{\sigma K^2 \|\SS(z_j)^{1/2}\|} \geq \frac{M (\sqrt{\log n}) \mathrm{tr}[(\SS(z_j)]^{1/2}}{K^2 \|\SS(z_j)^{1/2} \|}$$
Therefore, we have that 
\begin{align*}
    \min \left\{\frac{t^2 C_{ij}^2}{\sigma^2K^4 \mathrm{tr}[\SS(z_j)]}, \frac{t C_{ij}}{\sigma K^2 \|\SS(z_j)^{1/2}\|} \right\} \geq  \frac{M}{K^2} (\log n)\min \left\{\frac{M}{K^2},  \frac{ \mathrm{tr}[(\SS(z_j)]^{1/2}}{ \|\SS(z_j)^{1/2} \|}\frac{1}{\sqrt{\log n}} \right \},
\end{align*}
and using the assumption that $\min_{i\in[n]}\mathrm{tr}(\boldsymbol{\Sigma}(z_i))/\|\boldsymbol{\Sigma}(z_i)\|\in\Omega(\log n)$, there exists $c_{4}>0$ and $n_2$ such that for $n\geq n_{2}$, 
$$\frac{ \mathrm{tr}[(\SS(z_j)]^{1/2}}{ \|\SS(z_j)^{1/2} \|}\frac{1}{\sqrt{\log n}} \geq c_4.$$
Hence, for $n \geq n_2$, 
$$ \min \left\{\frac{t^2 C_{ij}^2}{\sigma^2K^4 \mathrm{tr}[\SS(z_j)]}, \frac{t C_{ij}}{\sigma K^2 \|\SS(z_j)^{1/2}\|} \right\} \geq (\log n) \frac{M}{K^2} \min \left\{ \frac{M}{K^2}, c_4\right \}. $$
Substituting this into \eqref{eq:esx_bound} gives the result.
\qed

\begin{lem}
For all $i,j \in [n]:$
$$
\P\left(\left|\sigma\mathbf{E}_i^\top \MMu(z_j)\right|>\frac{t}{8} C_{ij}\right)\leq 2 \exp \left( - \frac{c_6 M^2}{K^2} \log n \right).
$$
\end{lem}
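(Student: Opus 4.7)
The plan is to view $\sigma \mathbf{E}_i^\top \MMu(z_j) = \sigma \sum_{k=1}^p E_i^{(k)} \mu_k(z_j)$ as a linear combination of independent, zero-mean, sub-Gaussian random variables and apply a Hoeffding-type tail bound exactly as in the proof of lemma \ref{lem:hoeffding1}. By \textbf{A\ref{ass:E}}, the $E_i^{(k)}$ are independent, centered, and satisfy $\|E_i^{(k)}\|_{\psi_2} \leq K$, so that the squared sub-Gaussian norm of $\sigma \mathbf{E}_i^\top \MMu(z_j)$ is bounded by a constant times $K^2 \sigma^2 \|\MMu(z_j)\|^2$. A Hoeffding-type inequality for sub-Gaussians \cite[Prop. 5.10]{vershynin2010introduction} then gives, for some absolute constant $c > 0$,
$$\P\left(\left|\sigma \mathbf{E}_i^\top \MMu(z_j)\right| > \frac{t}{8} C_{ij}\right) \;\leq\; 2\exp\left(-\frac{c\, t^2 C_{ij}^2}{K^2 \sigma^2 \|\MMu(z_j)\|^2}\right).$$

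The second step is to lower bound $t^2 C_{ij}^2 / (\sigma^2 \|\MMu(z_j)\|^2)$ by $M^2 \log n$ up to an absolute constant. From the definition of $C_{ij}$, together with $\E[\|\Y_i\|^2] \geq p\sigma^2$ and $\E[\|\Y_j\|^2] \geq \|\MMu(z_j)\|^2$, one obtains
$$C_{ij}^2 \;\geq\; p\,\sigma^2\, \|\MMu(z_j)\|^2.$$
Combining this with the trivial bound $\min_{i\in[n]} p_\mathrm{int}^{(i)} \leq p$, which yields $t^2 \geq M^2 (\log n) / p$, we get
$$\frac{t^2 C_{ij}^2}{\sigma^2 \|\MMu(z_j)\|^2} \;\geq\; M^2 \log n.$$
Substituting back produces the claimed bound with $c_6$ a suitable multiple of $c$.

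The main obstacle, if one can call it that, is in fact absent here: unlike the previous three lemmas, this estimate does not require the ambient intrinsic dimension assumption $\min_i \mathrm{tr}[\SS(z_i)]/\|\SS(z_i)\| \in \Omega(\log n)$, because no covariance matrix appears in $\sigma \mathbf{E}_i^\top \MMu(z_j)$. Consequently the bound holds for \emph{all} $n$ rather than only $n \geq n_3$, and the geometry simplifies to the straightforward interplay between $\sigma^2$, $\|\MMu(z_j)\|^2$, and $p$. Degenerate cases $\sigma = 0$ or $\MMu(z_j) = 0$ make the event probability trivially zero, so can be disregarded.
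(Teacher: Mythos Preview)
Your proof is correct and follows essentially the same route as the paper: apply a Hoeffding-type sub-Gaussian tail bound with $\|\sigma\mathbf{E}_i^\top\MMu(z_j)\|_{\psi_2}^2 \leq \sigma^2 K^2\|\MMu(z_j)\|^2$, then use $C_{ij}^2 \geq p\sigma^2\|\MMu(z_j)\|^2$ and $t^2 \geq M^2(\log n)/p$ to simplify the exponent. Your additional observation that this lemma does not require the $\min_i p_{\mathrm{int}}^{(i)}\in\Omega(\log n)$ hypothesis (so the bound holds for all $n$, not merely $n\geq n_3$) is correct and not explicitly noted in the paper.
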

\proof
Using that $\|\sigma\mathbf{E}_i^\top \MMu(z_j) \|_{\psi_2}^2 \leq \sigma^2 K^2 \|\MMu(z_j) \|_2^2 $, we can bound the above probability using a Hoeffding-type equality for sub-gaussian random variables to get 
$$
\P\left(\left|\sigma\mathbf{E}_i^\top \MMu(z_j)\right|>\frac{t}{8} C_{ij}\right)\leq 2 \exp \left( - \frac{c_6 t^2 C_{ij}^2}{\sigma^2 K^2 \|\MMu(z_j) \|_2^2} \right).
$$
Then, using that $C_{ij}^2 \geq p \sigma^2\|\MMu(z_j)\|_2^2$ and $t^2 \geq M^2(\log n) /p$, we get
$$
\P\left(\left|\sigma\mathbf{E}_i^\top \MMu(z_j)\right|>\frac{t}{8} C_{ij}\right)\leq 2 \exp \left( - \frac{c_6 M^2}{K^2} \log n\right).
$$
\qed

\begin{lem}
\label{lem:rand_func_last}
There exists $c_7, c_8 > 0$ and $n_0$ such that for $n \geq n_3$
$$
\P\left(\left|\sigma^2 (\mathbf{E}_i\cdot\mathbf{E}_j -\E[\mathbf{E}_i\cdot\mathbf{E}_j]) \right|>\frac{t}{8} C_{ij}\right)\leq 2 \exp \left(- (\log n)\frac{c_7M}{K^2}\min \left \{\frac{M}{K^2}, c_8 \right\}\right)
$$
for all $i,j \in [n]$.
\end{lem}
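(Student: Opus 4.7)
The plan is to apply the generalised Hanson-Wright inequality (Theorem \ref{thm:HW_inequality}) directly, with $\mathbf{X}=\mathbf{E}_i$, $\mathbf{X}^\prime=\mathbf{E}_j$, and $\mathbf{A}=\sigma^2\mathbf{I}_p$, so that $\mathbf{X}^\top\mathbf{A}\mathbf{X}^\prime=\sigma^2\mathbf{E}_i\cdot\mathbf{E}_j$. First I would verify the hypotheses of Theorem \ref{thm:HW_inequality}: under \textbf{A\ref{ass:E}}, for fixed $i$ the entries $E_i^{(k)}$ are independent across $k$, centred, and satisfy $\|E_i^{(k)}\|_{\psi_2}\le K$, while across $i$ the $\mathbf{E}_i$ are i.i.d. Hence in the case $i\neq j$ the pairs $(E_i^{(k)},E_j^{(k)})$ are independent across $k$ by joint independence of $\mathbf{E}_i$ and $\mathbf{E}_j$; in the case $i=j$ the pairs $(E_i^{(k)},E_i^{(k)})$ are independent across $k$ by the within-vector independence. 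A direct calculation gives $\|\mathbf{A}\|_{\mathrm{F}}^2=\sigma^4 p$ and $\|\mathbf{A}\|=\sigma^2$.

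Next I would substitute these quantities into Theorem \ref{thm:HW_inequality} with deviation level $s=tC_{ij}/8$, obtaining
\[
\P\!\left(\left|\sigma^2(\mathbf{E}_i\cdot\mathbf{E}_j-\E[\mathbf{E}_i\cdot\mathbf{E}_j])\right|>\tfrac{t}{8}C_{ij}\right)\le 2\exp\!\left(-c\min\!\left\{\frac{t^2C_{ij}^2}{64K^4\sigma^4 p},\,\frac{tC_{ij}}{8K^2\sigma^2}\right\}\right).
\]
The remaining task is to lower bound the argument of the $\min$. The definition of $C_{ij}$ shows each of its two factors is at least $p\sigma^2$, so $C_{ij}\ge p\sigma^2$ and $C_{ij}^2\ge p^2\sigma^4$, which cancels the $\sigma$-dependence in both ratios and reduces the problem to lower bounding $t^2 p$ and $tp$.

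For the first ratio, I would use $\min_{i\in[n]}p_{\mathrm{int}}^{(i)}\le p$ (since the intrinsic dimension of any covariance matrix is bounded by the ambient dimension) to get
\[
t^2 p=\frac{M^2 p\log n}{\min_{i\in[n]}p_{\mathrm{int}}^{(i)}}\ge M^2\log n,
\]
giving $t^2C_{ij}^2/(64K^4\sigma^4 p)\ge M^2\log n/(64K^4)$. For the second ratio, the same inequality yields $tp\ge M\sqrt{p\log n}$, and the standing assumption $\min_{i\in[n]}p_{\mathrm{int}}^{(i)}\in\Omega(\log n)$ together with $p\ge\min_{i\in[n]}p_{\mathrm{int}}^{(i)}$ implies that for all sufficiently large $n\ge n_3$ one has $p\ge c\log n$ for some constant $c>0$, hence $tp\ge M\sqrt{c}\,\log n$ and $tC_{ij}/(8K^2\sigma^2)\ge M\sqrt{c}\log n/(8K^2)$. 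Combining the two bounds,
\[
\min\!\left\{\tfrac{t^2C_{ij}^2}{64K^4\sigma^4 p},\,\tfrac{tC_{ij}}{8K^2\sigma^2}\right\}\ge\frac{M\log n}{K^2}\,\min\!\left\{\tfrac{M}{64K^2},\,\tfrac{\sqrt{c}}{8}\right\},
\]
and absorbing the numerical constants into $c_7$ and $c_8$ yields the stated bound.

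I do not expect any real obstacle: the argument is essentially a bookkeeping exercise parallel to the earlier lemmas in this section, with the noise vectors $\mathbf{E}_i,\mathbf{E}_j$ playing the role of $\mathbf{X}(z_i),\mathbf{X}(z_j)$ but with $\mathbf{A}=\sigma^2\mathbf{I}_p$ rather than $\mathbf{\Sigma}_{ij}$. The one place requiring mild care is the handling of the case $\sigma=0$, where the statement is trivially true because the left-hand side vanishes, and of the independence verification needed to invoke Theorem \ref{thm:HW_inequality} in the diagonal case $i=j$; both are dispatched by inspection from \textbf{A\ref{ass:E}}.
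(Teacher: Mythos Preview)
Your proposal is correct and follows essentially the same route as the paper: apply Theorem~\ref{thm:HW_inequality} with $\mathbf{A}=\sigma^2\mathbf{I}_p$, then use $C_{ij}\ge p\sigma^2$ together with $p\ge\min_{i\in[n]}p_{\mathrm{int}}^{(i)}\in\Omega(\log n)$ to lower bound both terms in the minimum. Your explicit verification of the pairwise-independence hypothesis and the $\sigma=0$ edge case are nice touches that the paper leaves implicit, but the argument is otherwise identical.
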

\proof
First, we bound the above probability using theorem \ref{thm:HW_inequality} with $\mathbf{A} = \sigma^2 \mathbf{I}_p$ to get 
$$
\P\left(\left|\sigma^2 (\mathbf{E}_i\cdot\mathbf{E}_j -\E[\mathbf{E}_i\cdot\mathbf{E}_j]) \right|>\frac{t}{8} C_{ij}\right)\leq 2 \exp \left( -c_7 \min \left\{ \frac{t^2 C_{ij}^2}{\sigma^4 K^4 p}, \frac{t C_{ij}}{\sigma^2 K^2}\right\}\right).
$$
Then we use that $C_{ij} \geq \sigma^2 p$ and $t \geq M\sqrt{(\log n) /p}$ to get
$$\min \left\{ \frac{t^2 C_{ij}^2}{\sigma^4 K^4 p}, \frac{t C_{ij}}{\sigma^2 K^2}\right\} \geq \frac{M}{K^2}(\log n)\min \left \{\frac{M}{K^2}, \frac{p^{1/2}}{\sqrt{\log n}} \right\}.$$
Finally, using that $p \geq \min_{i \in [n]} p_{\text{int}}^{(i)} \in \Omega (\log n)$, it follows that there exists $c_8 \geq 0$ and $n_3$ such that for $n \geq n_3$, $\sqrt{p/ \log n} \geq c_8$. Therefore, we get that for $n \geq n_3$,
$$
\P\left(\left|\sigma^2 (\mathbf{E}_i\cdot\mathbf{E}_j -\E[\mathbf{E}_i\cdot\mathbf{E}_j]) \right|>\frac{t}{8} C_{ij}\right)\leq 2 \exp \left( - \frac{c_7M}{K^2}(\log n)\min \left \{\frac{M}{K^2}, c_8 \right\}\right).
$$

\qed

\begin{proof}[Proof of theorem \ref{thm:rand_func}]

Define
\[
\pi_{ij}(t) := \mathbb{P}\left( \left|\frac{\mathbf{Y}_i \cdot \mathbf{Y}_j }{\mathbb{E}[\|\mathbf{Y}_i\|^2]^{1/2}\mathbb{E}[\|\mathbf{Y}_j\|^2]^{1/2}}- \frac{\mathbb{E}[\mathbf{Y}_i \cdot \mathbf{Y}_j]}{\mathbb{E}[\|\mathbf{Y}_i\|^2]^{1/2}\mathbb{E}[\|\mathbf{Y}_j\|^2]^{1/2}} \right|  > t\right).
\]
Using our definition of $C_{ij}$, we can rewrite this as
$$\pi_{ij}(t) = \mathbb{P}\left(\left| \mathbf{Y}_i \cdot \mathbf{Y}_j - \mathbb{E}[\mathbf{Y}_i \cdot \mathbf{Y}_j ]\right| > t C_{ij} \right).$$
We can then decompose $\mathbf{Y}_i \cdot \mathbf{Y}_j - \mathbb{E}[\mathbf{Y}_i \cdot \mathbf{Y}_j ]$ in the following way:
\begin{align*}
    &\Y_i\cdot \Y_j  - \E[\Y_i\cdot \Y_j ] \nonumber\\
    &=\X(z_i)^\top\SS_{ij}\X(z_j) - \E[\X(z_i)^\top\SS_{ij}\X(z_j)] \\
    &+\MMu(z_i)^\top \SS(z_j)^{1/2}\X(z_j) + \MMu(z_j)^\top \SS(z_i)^{1/2}\X(z_i)\\
    &+\sigma\mathbf{E}_i^\top \SS(z_j)^{1/2}\X(z_j) + \sigma\mathbf{E}_j^\top \SS(z_i)^{1/2}\X(z_i)\\
    &+\sigma\mathbf{E}_i^\top \MMu(z_j) + \sigma\mathbf{E}_j^\top \MMu(z_i)\\
    &+\sigma^2 (\mathbf{E}_i\cdot\mathbf{E}_j -\E[\mathbf{E}_i\cdot\mathbf{E}_j]) .
\end{align*}
Using this decomposition, and a union bound to produce the result in terms of the maximum difference over all $i, j \in [n]$, we get
\begin{align}
    \pi_{\max} &= \mathbb{P}\left(\max_{i,j \in [n]} \left|\mathbf{Y}_i \cdot \mathbf{Y}_j - \mathbb{E}[\mathbf{Y}_i \cdot \mathbf{Y}_j]\right| > t C_{ij}\right)  \leq \sum_{ij}\pi_{ij}(t) \nonumber \\
    \label{eq:pi_decomp_first}
     &\leq \sum_{ij}\Bigg[ \mathbb{P}\left(\left|\X(z_i)^\top\SS_{ij}\X(z_j) - \E[\X(z_i)^\top\SS_{ij}\X(z_j)] \right| > \frac{t}{8} C_{ij} \right)\\
    &+\mathbb{P} \left( \left|\MMu(z_i)^\top \SS(z_j)^{1/2}\X(z_j) \right| > \frac{t}{8} C_{ij}  \right) +  \mathbb{P} \left( \left|\MMu(z_j)^\top \SS(z_i)^{1/2}\X(z_i) \right| > \frac{t}{8} C_{ij}  \right)\\
    &+ \mathbb{P} \left( \left| \sigma\mathbf{E}_i^\top \SS(z_j)^{1/2}\X(z_j)\right| > \frac{t}{8} C_{ij}  \right)+ \mathbb{P} \left( \left|\sigma\mathbf{E}_j^\top \SS(z_i)^{1/2}\X(z_i)  \right| > \frac{t}{8} C_{ij}  \right)\\
    &+ \mathbb{P} \left( \left|\sigma\mathbf{E}_i^\top \MMu(z_j) \right| > \frac{t}{8} C_{ij}  \right) + \mathbb{P} \left( \left|\sigma\mathbf{E}_j^\top \MMu(z_i)  \right| > \frac{t}{8} C_{ij}  \right)\\
    &+ \mathbb{P} \left( \left|\sigma^2 (\mathbf{E}_i\cdot\mathbf{E}_j -\E[\mathbf{E}_i\cdot\mathbf{E}_j]) \right| > \frac{t}{8} C_{ij}  \right) \Bigg].
    \label{eq:pi_decomp_last}
\end{align}
Next we use lemmas \ref{lem:rand_func_first}-\ref{lem:rand_func_last} to bound the terms in expressions (\ref{eq:pi_decomp_first})-(\ref{eq:pi_decomp_last}), respectively. Letting $c \geq c_1, \ldots, c_8$ and $n_4 \geq n_0, \ldots, n_3$, we get that for $n \geq n_4$:
\begin{align*}
    \pi_{\max} &\leq \sum_{ij} \left[ 8 \exp \left( -  (\log n) \frac{cM}{K^2} \min \left\{ \frac{M}{K^2}, c\right\} \right) + 8 \exp\left(-\frac{c M^2}{K^2} (\log n) \right) \right] \\
    &=  8 \exp \left( -  (\log n) \frac{cM}{K^2} \min \left\{ \frac{M}{K^2}, c\right\} + 2 \log n\right) + 8 \exp\left(-\frac{c M^2}{K^2} (\log n) +2 \log n \right).
\end{align*}
Now fix any $\varepsilon >0$. By choosing $M$ large enough so that $\frac{M}{K^2}>c$, and then increasing $M$ if necessary, we can achieve
$$
\pi_{\max} \leq 8 \exp \left(- \log n \left( \frac{c^2 M}{K^2} - 2 \right) \right) + 8 \exp \left( -\log n \left( \frac{cM^2}{K^2} - 2\right)\right) \leq \varepsilon,
$$
and the result of the theorem follows.
\end{proof}

\begin{proof}[Proof of proposition \ref{prop:rand_func}]
We have 
\[
\mathbb{E}[\mathbf{Y}_{i}\cdot\mathbf{Y}_{j}]=\phi(z_{i})\cdot\phi(z_{j})+p\sigma^{2}\mathbf{I}[i=j],
\]
hence
\begin{align*}
 & \left|\frac{\mathbf{Y}_{i}\cdot\mathbf{Y}_{j}}{p}-\frac{\phi(z_{i})\cdot\phi(z_{j})}{p}-\sigma^{2}\mathbf{I}[i=j]\right|\\
 & =\left|\frac{\mathbf{Y}_{i}\cdot\mathbf{Y}_{j}}{\mathbb{E}[\|\mathbf{Y}_{i}\|^{2}]^{1/2}\mathbb{E}[\|\mathbf{Y}_{j}\|^{2}]^{1/2}}-\frac{\mathbb{E}[\mathbf{Y}_{i}\cdot\mathbf{Y}_{j}]}{\mathbb{E}[\|\mathbf{Y}_{i}\|^{2}]^{1/2}\mathbb{E}[\|\mathbf{Y}_{j}\|^{2}]^{1/2}}\right|\frac{\mathbb{E}[\|\mathbf{Y}_{i}\|^{2}]^{1/2}\mathbb{E}[\|\mathbf{Y}_{j}\|^{2}]^{1/2}}{p}\\
 & \leq\left|\frac{\mathbf{Y}_{i}\cdot\mathbf{Y}_{j}}{\mathbb{E}[\|\mathbf{Y}_{i}\|^{2}]^{1/2}\mathbb{E}[\|\mathbf{Y}_{j}\|^{2}]^{1/2}}-\frac{\mathbb{E}[\mathbf{Y}_{i}\cdot\mathbf{Y}_{j}]}{\mathbb{E}[\|\mathbf{Y}_{i}\|^{2}]^{1/2}\mathbb{E}[\|\mathbf{Y}_{j}\|^{2}]^{1/2}}\right|\frac{1}{p}\max_{i\in[n]}\mathbb{E}[\|\mathbf{Y}_{i}\|^{2}]
\end{align*}
where 
\[
\frac{1}{p}\max_{i\in[n]}\mathbb{E}[\|\mathbf{Y}_{i}\|^{2}]=\max_{i\in[n]}\frac{\mathrm{tr}[\boldsymbol{\Sigma}(z_{i})]+\|\boldsymbol{\mu}(z_{i})\|^2}{p}+\sigma^{2}.
\]
 The first claim of the proposition then follows from theorem \ref{thm:rand_func}.

For the second claim of the proposition, note that for $i\neq j$,
\begin{multline*}
\frac{\CosSim(\phi(z_i),\phi(z_j))}{\gamma_{ij}(\sigma)}\\= \frac{\phi(z_i)\cdot \phi(z_j)}{(\|\phi(z_i)\|^2+p\sigma^2)^{1/2}(\|\phi(z_j)\|^2+p\sigma^2)^{1/2}} = \frac{\E[\Y_i\cdot\Y_j]}{\E[\|\Y_i\|^2]^{1/2}\E[\|\Y_j\|^2]^{1/2}}.
\end{multline*}
So we need to prove that 
\begin{equation}\label{eq:cossim_objective}
\max_{i\neq j\in[n]}\left|\frac{\Y_i\cdot \Y_j}{\|\Y_i\|\|\Y_j\|}-\frac{\E[\Y_i\cdot\Y_j]}{\E[\|\Y_i\|^2]^{1/2}\E[\|\Y_j\|^2]^{1/2}}\right| \in O_{\P}\left(\sqrt{\frac{\log n}{\min_{i\in[n]}p_{\mathrm{int}}^{(i)}}}\right). 
\end{equation}
The only differences between this and the result of theorem \ref{thm:rand_func} is that here we only need to consider the maximum over $i\neq j$, and the normalisation by $\|\Y_i\|\|\Y_j\|$ of the first term within the absolute value in \eqref{eq:cossim_objective}.

Consider the decomposition:
\begin{align}
&\frac{\Y_i\cdot \Y_j}{\|\Y_i\|\|\Y_j\|}-\frac{\E[\Y_i\cdot\Y_j]}{\E[\|\Y_i\|^2]^{1/2}\E[\|\Y_j\|^2]^{1/2}}\nonumber \\
&=\frac{\Y_i\cdot\Y_j}{\E[\|\Y_i\|^2]^{1/2}\E[\|\Y_j\|^2]^{1/2}} - \frac{\E[\Y_i\cdot\Y_j]}{\E[\|\Y_i\|^2]^{1/2}\E[\|\Y_j\|^2]^{1/2}}\label{eq:cossim_decomp1} \\
&+ \Y_i\cdot\Y_j\left[\frac{1}{\|\Y_i\|\|\Y_j\|} -\frac{1}{\E[\|\Y_i\|^2]^{1/2}\E[\|\Y_j\|^2]^{1/2}} \right].\label{eq:cossim_decomp2}
\end{align}
Theorem \ref{thm:rand_func} implies that the maximum over $i\neq j$ of the term in \eqref{eq:cossim_decomp1} is in $O_{\P}\left(\sqrt{\frac{\log n}{\min_{i\in[n]}p_{\mathrm{int}}^{(i)}}}\right)$.

Consider the term in \eqref{eq:cossim_decomp2}. Using the triangle inequality, Cauchy-Schwarz, and triangle inequality again, we have:
\begin{align}
&\left|\Y_i\cdot\Y_j\left[\frac{1}{\|\Y_i\|\|\Y_j\|} -\frac{1}{\E[\|\Y_i\|^2]^{1/2}\E[\|\Y_j\|^2]^{1/2}} \right]\right|\nonumber\\
&= \left| \frac{\Y_i
}{\|\Y_i\|}\cdot\left(\frac{\Y_j
}{\|\Y_j\|}-\frac{\Y_j
}{\E[\|\Y_j\|^2]^{1/2}}\right) + \frac{\Y_j}{\E[\|\Y_j\|^2]^{1/2}}\cdot\left(  \frac{\Y_i}{\|\Y_i\|} - 
 \frac{ \Y_i}{\E[\|\Y_i\|^2]^{1/2}}\right)  \right|\nonumber\\
 &\leq \|\Y_j\|\left|\frac{1}{\|\Y_j\|}-\frac{1}{\E[\|\Y_j\|^2]^{1/2}}\right| + \frac{ \|\Y_j\|}{\E[\|\Y_j\|^2]^{1/2}}\|\Y_i\|\left|\frac{1}{\|\Y_i\|}-\frac{1}{\E[\|\Y_i\|^2]^{1/2}} \right|\nonumber\\
 &\leq \left|\frac{\|\Y_j\|}{\E[\|\Y_j\|^2]^{1/2}}-1\right| + \left|\frac{\|\Y_j\|}{\E[\|\Y_j\|^2]^{1/2}}-1\right|\left|\frac{\|\Y_i\|}{\E[\|\Y_i\|^2]^{1/2}}-1\right| +  \left|\frac{\|\Y_i\|}{\E[\|\Y_i\|^2]^{1/2}}-1\right|.\label{eq:cossim_norm_decomp}
\end{align}
Theorem \ref{thm:rand_func} implies
\begin{equation}\label{eq:norm_bound}
\max_i \left|\frac{\|\Y_i\|^2}{\E[\|\Y_i\|^2]}-1\right|\in O_{\mathbb{P}}\left(\sqrt{\frac{\log n}{ \min_i p_{\mathrm{int}}^{(i)
}}}\right),
\end{equation} i.e., for any $\epsilon>0$ there exists $M$ and $n_0$ such that for any $n\geq n_0$, 
$$
\mathbb{P}\left(\max_i \left|\frac{\|\Y_i\|}{\E[\|\Y_i\|^2]^{1/2}}-1\right|>M\sqrt{\frac{\log n}{\min_i p_{\mathrm{int}}^{(i)
}}}\right)\leq \epsilon.
$$
Now for any $a>0$, $(a+1)|a-1|=|a^2-1|$, i.e. $|a-1|\leq |a^2-1|$, hence 
$$
\max_i \left|\frac{\|\Y_i\|}{\E[\|\Y_i\|^2]^{1/2}}-1\right|>M\sqrt{\frac{\log n}{\min_i p_{\mathrm{int}}^{(i)
}}} \quad \Rightarrow \quad\max_i \left|\frac{\|\Y_i\|^2}{\E[\|\Y_i\|^2]}-1\right|>M\sqrt{\frac{\log n}{\min_i p_{\mathrm{int}}^{(i)
}}},
$$  
so \eqref{eq:norm_bound} implies:
$$
\max_i \left|\frac{\|\Y_i\|}{\E[\|\Y_i\|^2]^{1/2}}-1\right|\in O_{\mathbb{P}}\left(\sqrt{\frac{\log n}{ \min_i p_{\mathrm{int}}^{(i)
}}}\right).
$$
Thus, via \eqref{eq:cossim_norm_decomp}, the maximum over $i\neq j$ of the term in \eqref{eq:cossim_decomp2} is in $O_{\mathbb{P}}\left(\sqrt{\frac{\log n}{ \min_i p_{\mathrm{int}}^{(i)
}}}\right)$.
We have thus shown that \eqref{eq:cossim_objective} holds as required, and that completes the proof of the second claim of the proposition.
\end{proof}

\begin{proof}[Proof of lemma \ref{lem:homeo}]
We have:
\begin{align*}
\E[\|\Y^{\mathrm{nf}}(z)-\Y^{\mathrm{nf}}(z^{\prime})\|^2] &= \E[\|\Y^{\mathrm{nf}}(z)\|^2] + \E[\|\Y^{\mathrm{nf}}(z^{\prime})\|^2]  -2 \E[\Y^{\mathrm{nf}}(z)\cdot \Y^{\mathrm{nf}}(z^{\prime})]\\
 &= \|\phi(z)\|^2 + \|\phi(z^{\prime}\|^2  - 2 \phi(z)\cdot \phi(z^{\prime})\\
 &= \|\phi(z) - \phi(z^{\prime})\|^2.
\end{align*}
Therefore assumption \textbf{A\ref{ass:MScontinuity}} is equivalent to continuity of $\phi$, and \textbf{A\ref{ass:homeo}} implies $\phi$ is invertible on its image $\*M$. A result in the theory of metric spaces \citeapp[Prop. 13.26]{sutherland2009introduction}, states that any continuous invertible mapping on a compact domain has an continuous inverse. Therefore the inverse of $\phi$ is continuous. 
\end{proof}

\section{Supporting material for section \ref{sec:tda_consistency}}
\subsection{Bounding the bottleneck distance}
First let us collect the following definitions and results from \cite[Section 2.1]{chazal2013optimal}:
\begin{enumerate}
\item The Hausdorff distance between two compact subsets $C_1$ and $C_2$ of a metric space is
\[d_{\text{H}}(C_1, C_2) = \max\{\sup_{x\in C_1}d(x,C_2), \sup_{x \in C_2} d(x,C_1)\},\]
where the distance $d(x, C)$ to a compact subset $C$ is the minimum over all $y \in C$ of $d(x,y)$.
\item Two compact metric spaces are said to be isometric if there exists a bijection between them that preserves distances.
\item The Gromov-Hausdorff distance $d_{\text{GH}}(\*M,\*M^{\prime})$ between two compact metric spaces $\*M$ and $\*M^{\prime}$ is the infimum of the real numbers $r \geq 0$ such that there exists a metric space with compact subspaces $C_1$ and $C_2$ which are isometric to $\*M$ and $\*M^{\prime}$ respectively and such that $d_{\text{H}}(C_1,C_2) < r$. If $\*M$ and $\*M^{\prime}$ are compact subsets of the same metric space, then $d_{\text{GH}}(\*M,\*M^{\prime}) \leq d_{\text{H}}(\*M,\*M^{\prime})$.
\item Let $\text{Filt}(\*M)$ denote either the Rips, \v{C}ech, or Alpha filtration of $\*M$, $\text{dgm}(\text{Filt}(\*M))$ the associated persistent diagram (simply writing $\text{dgm}(\*M)$ for brevity in the main document), and $d_{\text{b}}$ the bottleneck distance between persistence diagrams. Then, for two compact metric spaces $\*M, \*M^{\prime}$, 
  \[d_{\text{b}}\left(\text{dgm}(\text{Filt}(\*M)), \text{dgm}(\text{Filt}(\*M^{\prime}))\right) \leq 2 d_{\text{GH}}(\*M, \*M^{\prime}).\]
\end{enumerate}
Applying the above,
\begin{align*}
  d_{\text{b}}\left(\text{dgm}(\text{Filt}(\*{Y}_n)), \text{dgm}(\text{Filt}(\*M))\right) & \leq 2 d_{\text{GH}}(\*{Y}_n, \*M) \leq 2 (d_{\text{GH}}(\*{Y}_n,  \*{M}_n) + d_{\text{GH}}(\*{M}_n, \*M))\\ &\leq 2 (d_{\text{GH}}(\*{Y}_n,  \*{M}_n) + d_{\text{H}}(\*{M}_n, \*M))
\end{align*}

\subsection{Bound on the Gromov-Hausdorff distance}\label{sec:GH_distance}
Here, we prove that
\[d_{\text{GH}}^2(p^{-1/2}\mathcal{Y}_n,  p^{-1/2}\mathcal{M}_n) \leq \max_{i,j \in [n]}\frac{1}{p}\left|\Y_i \cdot \Y_j - \phi(z_i) \cdot \phi(z_j) \right|.\]
Let $\+W_i \coloneqq \phi(z_i)$. Then, 
\begin{align*}
  &\left| \lVert \+Y_i - \+Y_j \rVert^2 - \lVert \+W_i - \+W_j \rVert^2\right|\\ &= \left| (\+Y_i \cdot \+Y_i - \+W_i \cdot \+W_i) + (\+Y_j \cdot \+Y_j - \+W_j \cdot \+W_j) - 2(\+Y_i \cdot \+Y_j - \+W_i \cdot \+W_j)\right|\\
  &\leq 4 \epsilon.
\end{align*}
where $\epsilon \coloneqq \max_{i,j \in [n]} \left|\+Y_i \cdot \+Y_j - \+W_i \cdot \+W_j \right|.$

Applying e.g. \cite[Lemma 6]{whiteley2021matrix}, we have $\left||a|-|b|
\right| \leq |a^2- b^2|^{1/2}$ for any $a,b \in \R$, therefore,
\begin{equation}\label{eq:diff_of_dist}
\left| \lVert \+Y_i - \+Y_j \rVert - \lVert \+W_i - \+W_j \rVert \right| \leq \sqrt{4 \epsilon},\end{equation}
for all $i,j \in [n]$.

Now, let us recall some definitions and results from \cite{ivanov2016realizations}:
\begin{enumerate}
\item A \emph{correspondence} between two metric spaces $(\*M, d)$ and $(\*M^\prime,d^\prime)$ is a subset $R \subset \*M \times \*M^\prime$ such that the canonical projections $\pi_1 : (x,y) \mapsto x$ and $\pi_1 : (x,y) \mapsto y$ for $(x,y) \in R$ are surjective, and the set of all correspondences is denoted $\*R(\*M, \*M^\prime)$.
\item The distortion of a correspondence is
  \[\text{dis}(R) \coloneqq \sup\left\{\left|d(x,x') - d^\prime(y,y')\right| : (x,y), (x',y') \in R\right\},\]
\item The Gromov-Hausdorff distance $d_{\text{GH}}(\*M,\*M^\prime)$ quantifies the `best correspondence' \citep[Theorem 1.1]{ivanov2016realizations},
\[d_{\text{GH}}(\*M,\*M^\prime) = \frac{1}{2} \inf \{ \text{dis}(R) : R \in \*R(\*M, \*M^\prime)\}.\]
\end{enumerate}
The set $R_0 = \{(\+Y_1,\+W_1), \ldots, (\+Y_n,\+W_n)\}$ is a correspondence between $\*Y_n$ and $\*M_n$, therefore,
\[d_{\text{GH}}(\*Y_n,\*M_n) = \frac{1}{2} \inf \{ \text{dis}(R) : R \in \*R(\*Y_n, \*M_n) \leq \frac{1}{2}\text{dis}(R_0) \leq \frac{1}{2}\sqrt{4 \epsilon} = \sqrt{\epsilon},\]where the penultimate inequality uses \eqref{eq:diff_of_dist}.
Therefore, 
\begin{multline*}
d^2_{\text{GH}}(p^{-1/2}\*Y_n,p^{-1/2}\*M_n) = \frac{1}{p} d^2_{\text{GH}}(\*Y_n,\*M_n) \leq \frac{1}{p} \epsilon \\  = \max_{i,j \in [n]}\frac{1}{p}\left|\Y_i \cdot \Y_j - \phi(z_i) \cdot \phi(z_j) \right|\leq \max_{i,j \in [n]}\frac{1}{p}\left|\Y_i \cdot \Y_j - \phi(z_i) \cdot \phi(z_j) -\sigma^2 \mathbf{I}[i=j]\right|+\sigma^2\\
,
\end{multline*}
as claimed at the end of section \ref{sec:tda_consistency}.

\section{TDA and manifold learning using normalised data \texorpdfstring{$\Y_i/\|\Y_i\|$}{}}
In connection with the second part of proposition \ref{prop:rand_func}, we can consider TDA and manifold learning  based on the `self-normalised' data,
\begin{align*}\overline{\*Y}_n&\coloneqq\{\Y_i/\|\Y_i\|;i\in[n]\}\\
\overline{\*M}_n &\coloneqq\{\phi(z_i)/\|\phi(z_i)\|;i\in[n]\},\\ \overline{\*M}&\coloneqq \{\phi(z)/\|\phi(z)\|;z\in\*Z\}.
\end{align*}
A practical advantage of this `self-normalised' setting is that we do not need to decide on a specific rescaling factor such as $p$ in the first part of proposition \ref{prop:rand_func}.

We can consider the question of how $\overline{\*M}$ relates to $\*Z$, analogously to lemma \ref{lem:homeo} but under the following assumption instead of \textbf{A\ref{ass:homeo}}. 
\begin{assumption}\label{ass:homeo_self_norm}
If $z\neq z^\prime$, then:
$$
\E\left[\left\lVert\frac{\Y^{\mathrm{nf}}(z)}{\E[\|\Y^{\mathrm{nf}}(z)\|^2]^{1/2}}-\frac{\Y^{\mathrm{nf}}(z^\prime)}{\E[\|\Y^{\mathrm{nf}}(z^\prime)\|^2]^{1/2}}\right\rVert^2\right]>0.$$
\end{assumption}
\begin{lem}\label{lem:homeo_self_norm}
Under \textbf{A\ref{ass:MScontinuity}}, \textbf{A\ref{ass:Z_compact}} and \textbf{A\ref{ass:homeo_self_norm}}, $z\mapsto\phi(z)/\|\phi(z)\|$ is a homeomorphism between $\*Z$ and $\overline{\*M}$.
\end{lem}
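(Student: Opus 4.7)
The plan is to mirror the proof of lemma \ref{lem:homeo}, now with the feature map $\phi$ replaced by its normalised counterpart $\tilde\phi(z) := \phi(z)/\|\phi(z)\|$. The crucial step is an analogue of the identity $\E[\|\Y^{\mathrm{nf}}(z)-\Y^{\mathrm{nf}}(z')\|^2] = \|\phi(z)-\phi(z')\|^2$ used there. Noting that Mercer's representation \eqref{eq:mercer} gives $\E[\Y^{\mathrm{nf}}(z)\cdot\Y^{\mathrm{nf}}(z')] = \phi(z)\cdot\phi(z')$, and in particular $\E[\|\Y^{\mathrm{nf}}(z)\|^2] = \|\phi(z)\|^2$, I would expand the squared norm appearing in \textbf{A\ref{ass:homeo_self_norm}} by bilinearity to obtain
\[
\E\!\left[\left\lVert\frac{\Y^{\mathrm{nf}}(z)}{\E[\|\Y^{\mathrm{nf}}(z)\|^2]^{1/2}}-\frac{\Y^{\mathrm{nf}}(z')}{\E[\|\Y^{\mathrm{nf}}(z')\|^2]^{1/2}}\right\rVert^2\right] = 2 - \frac{2\,\phi(z)\cdot\phi(z')}{\|\phi(z)\|\|\phi(z')\|} = \|\tilde\phi(z)-\tilde\phi(z')\|^2.
\]
Note that \textbf{A\ref{ass:homeo_self_norm}} implicitly requires $\|\phi(z)\|>0$ for every $z\in\mathcal{Z}$, since otherwise the normalised vectors on the left-hand side are not well-defined; this will be used throughout.

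Having established this identity, injectivity of $\tilde\phi$ follows at once: if $z\neq z'$, then \textbf{A\ref{ass:homeo_self_norm}} combined with the identity gives $\|\tilde\phi(z)-\tilde\phi(z')\|>0$. For continuity, I would first invoke the proof of lemma \ref{lem:homeo}, where \textbf{A\ref{ass:MScontinuity}} was shown to be equivalent to continuity of $\phi$; consequently $z\mapsto\|\phi(z)\|$ is continuous, and because it is strictly positive on the compact set $\mathcal{Z}$, the quotient $\tilde\phi = \phi/\|\phi\|$ is continuous.

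Finally, I would apply the same metric-space result used at the end of the proof of lemma \ref{lem:homeo} (\citeapp[Prop. 13.26]{sutherland2009introduction}): a continuous bijection from a compact space onto a metric space has a continuous inverse. Compactness of $\mathcal{Z}$ is supplied by \textbf{A\ref{ass:Z_compact}}, and $\tilde\phi$ is a continuous bijection from $\mathcal{Z}$ onto $\overline{\mathcal{M}}$ by the preceding steps, so $\tilde\phi^{-1}$ is continuous and the lemma follows. There is no real obstacle here beyond verifying the algebraic identity cleanly; the only subtlety is the well-definedness issue $\|\phi(z)\|>0$, which I would address explicitly as part of interpreting \textbf{A\ref{ass:homeo_self_norm}}.
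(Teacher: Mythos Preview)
Your proposal is correct and follows essentially the same approach as the paper: both establish the identity linking the quantity in \textbf{A\ref{ass:homeo_self_norm}} to $\|\tilde\phi(z)-\tilde\phi(z')\|^2$ via \eqref{eq:mercer}, then invoke the argument of lemma~\ref{lem:homeo}. Your treatment is in fact slightly more careful than the paper's, which simply writes ``follows by the same arguments as in the proof of lemma~\ref{lem:homeo}'' without explicitly noting that continuity of $\tilde\phi$ requires the positivity of $\|\phi(z)\|$ to pass from continuity of $\phi$ to continuity of the quotient.
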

\begin{proof}
Using \eqref{eq:mercer},
\begin{align*}
\frac{1}{2}\left\lVert\frac{\phi(z)}{\|\phi(z)\|}-\frac{\phi(z^\prime)}{\|\phi(z^\prime)\|}\right\rVert^2 & =  1-\frac{\phi(z)\cdot\phi(z^\prime)}{\|\phi(z)\|\|\phi(z^\prime)\|} \\
& = 1-\frac{\E[\Y^{\mathrm{nf}}(z)\cdot \Y^{\mathrm{nf}}(z^\prime)]}{\E[\|\Y^{\mathrm{nf}}(z)\|^2]^{1/2}\E[\|\Y^{\mathrm{nf}}(z^\prime)\|^2]^{1/2}} \\
& = \frac{1}{2}\E\left[\left\lVert\frac{\Y^{\mathrm{nf}}(z)}{\E[\|\Y^{\mathrm{nf}}(z)\|^2]^{1/2}}-\frac{\Y^{\mathrm{nf}}(z^\prime)}{\E[\|\Y^{\mathrm{nf}}(z^\prime)\|^2]^{1/2}}\right\rVert^2 \right].
\end{align*}
The claim of the lemma then follows by the same arguments as in the proof lemma \ref{lem:homeo}.
\end{proof}
Lemma \ref{lem:homeo_self_norm} motivates the application of TDA to $\overline{\*Y}_n$ to discover the structure of $\*Z$. Following the same line of investigation as in section \ref{sec:GH_distance}, in order to bound $d^2_{\text{GH}}(\overline{\*Y}_n,\overline{\*M}_n)$ we can follow similar arguments as above subject to some small changes: let $\+W_i \coloneqq \phi(z_i)/\|\phi(z_i)\|$, so $\|\+W_i\|=1$, and 
\begin{align*}
  &\left| \left\lVert \frac{\+Y_i}{\|\Y_i\|} - \frac{\+Y_j}{\|\Y_j\|} \right\rVert^2 - \lVert \+W_i - \+W_j \rVert^2\right|\\ &= \left| \left(\frac{\+Y_i}{\|\Y_i\|} \cdot \frac{\+Y_i}{\|\Y_i\|} - \+W_i \cdot \+W_i\right) + \left(\frac{\+Y_j}{\|\Y_j\|} \cdot \frac{\+Y_j}{\|\Y_j\|} - \+W_j \cdot \+W_j\right) - 2\left(\frac{\+Y_i}{\|\Y_i\|} \cdot \frac{\+Y_j}{\|\Y_j\|} - \+W_i \cdot \+W_j\right)\right|\\
  & = 0 + 0 +2\left|\frac{\+Y_i}{\|\Y_i\|} \cdot \frac{\+Y_j}{\|\Y_j\|} - \+W_i \cdot \+W_j \right|
  \\
  &\leq 2 \epsilon,
\end{align*}
where $\epsilon \coloneqq \max_{i\neq j \in [n]} \left|\frac{\+Y_i}{\|\Y_i\|} \cdot \frac{\+Y_j}{\|\Y_j\|} - \+W_i \cdot \+W_j \right|.$
Following through the same steps as in section \ref{sec:GH_distance} then gives:
$$
d^2_{\text{GH}}(\overline{\*Y}_n,\overline{\*M}_n) \leq \max_{i\neq j}\left|\frac{\Y_i\cdot\Y_j}{\|\Y_i\|\|\Y_j\|} - \frac{\phi(z_i)\cdot \phi(z_j)}{\|\phi(z_i)\|\|\phi(z_j)\|}\right|.
$$
Applying the triangle inequality and adopting the notation of the second part of proposition \ref{prop:rand_func} we therefore have:
$$
d^2_{\text{GH}}(\overline{\*Y}_n,\overline{\*M}_n) \leq \max_{i\neq j}\left|\CosSim(\Y_i,\Y_j)-\frac{\CosSim(\phi(z_i),\phi(z_j))}{\gamma_{ij}(\sigma)}\right| + 1-\frac{1}{\max_{i\neq j}\gamma_{ij}(\sigma)},
$$
where Cauchy-Schwarz and $\gamma_{ij}(\sigma)\geq 1$ have been used. The first $\max_{i\neq j}$ term on the l.h.s. of the above inequality is controlled by proposition \ref{prop:rand_func}, where as the second term goes to zero as $\sigma\to 0$.

The task of looking for evidence of isometry of the mapping $z\mapsto \phi(z)/\|\phi(z)\|$ can be conducted as in section \ref{sec:isometry}, simply replacing $\Y_i$ there by $\Y_i/\|\Y_i\|$; thus comparing shortest path-lengths in $\overline{\*Y}_n$ to those in $\*Z_n$.





\section{Further discussion of the toy example}
Recall in the toy example:
$$
\phi(z)=p^{1/2}\left[z_1,\frac{2}{\pi}\sin(\pi z_2/2),\frac{2}{\pi}\cos(\pi z_2/2)\right],
$$
with $z=(z_1,z_2)\in\{(z_1,z_2):z_1^2 +z_2^2\}\subset \mathbb{R}^2$.
We have:
\begin{align*}
\frac{1}{p}\phi(z)\cdot\phi(z^\prime) &= z_1 z_1^\prime +\frac{4}{\pi^2}\sin(z_2 \pi/2)\sin(z_2^\prime \pi/2) + \frac{4}{\pi^2}\cos(z_2 \pi/2)\cos(z_2^\prime \pi/2).\\
& = z_1 z_1^\prime +\frac{4}{\pi^2} \cos[(z_2-z_2^\prime)\pi/2].
\end{align*}
It can be shown that isometry holds, indeed by direct calculation:
$$
\left.\frac{\partial^2 }{\partial z_i \partial z_j^\prime} \phi(z)\cdot\phi(z^\prime)\right|_{z=z^\prime} = \begin{cases}
p,\quad i=j, \\
0,\quad i\neq j,
\end{cases}
$$
which according to \citep[Thm. 1]{whiteley2021matrix} is a sufficient condition for isometry (the above expression for the partial derivatives means the matrix denoted by $\mathbf{H}_{\eta_t}$ in \citep[Thm. 1]{whiteley2021matrix} is proportional to the identity matrix for all $t$).

The visualisation in figure \ref{fig:sim_Example} was produced as follows. Write $\Y \coloneqq [\Y_1|\cdots|\Y_n]^\top$ and consider the SVD:
$$
\Y = \mathbf{U}_3\mathbf{S}_3\mathbf{V}_3^\top + \mathbf{U}_\perp\mathbf{S}_\perp\mathbf{V}_\perp^\top,
$$
where $\mathbf{S}_3$ is the 3x3 diagonal matrix with diagonal elements which are the three largest singular values, and the columns of $\mathbf{U}_3\in\mathbb{R}^{n\times 3}$, $\mathbf{V}_3\in\mathbb{R}^{p\times 3}$ are associated left/right singular vectors.

Subplots (a), (b), (c) in figure \ref{fig:sim_Example} show for respective values of $p$, the rows of $\mathbf{U}_3\mathbf{S}_3$ as a point-cloud in $\mathbb{R}^3$, rotated to align with (d) (this alignment accounts for the fact that singular vectors are only mathematically defined up to sign, so the sign obtained when computing the SVD in practice is arbitrary).

\section{Details of numerical results}

In this section, we will expand on the details necessary to reproduce the results in section \ref{sec:neuro}.
\citet{gardner2022toroidal} collected data over multiple days, from three different rats, covering various modules (groups of grid cells) within each rat's brain. In all of our experiments, we focus on a single dataset: \{rat `R', module 2, day 1\}, although this choice is arbitrary.

\subsection{Context}

In order to visualise the toroidal structure in the data, as in figure \ref{fig:torus}(a), we follow the approach of \cite{gardner2022toroidal}: perform PCA on the data into six dimensions, followed by UMAP \cite{mcinnes2018umap} into three dimensions, and colour points by the first PC. We follow this procedure for $n=15,000$ points, which were selected by retaining the 15,000 most active points, as measured by the mean firing rate in each time bin. The hyperparameters used for UMAP were: `n\_components'=3, `n\_neighbors'=2000, `min\_dist'=0.8, `metric'=`cosine' and `init'=`spectral'. Running the UMAP algorithm on 15,000 points had the longest running time of any of our experiments, taking 3 minutes 32 seconds. This was run on a Linux-based Dell Latitude laptop equipped with an 11th Gen Intel Core i7-1185G7 CPU (4 cores, 8 threads) and 15GiB of RAM, using Python v3.11. 

We also replicate the persistent homology analysis in \cite{gardner2022toroidal}, using the \texttt{ripser} Python package. Since computing the persistent homology of a set of points is computationally expensive and sensitive to outliers, it is common to downsample and dimension-reduce the data beforehand. Following the approach used for the torus visualisation, we begin by selecting the same 15,000 data points and apply PCA into six dimensions. The choice of six dimensions is explored in detail in \cite{gardner2022toroidal}, where they show that the first 6 dimensions retain a large proportion of the variance in the data. Using the authors' code \cite{gardner_code}, we then downsample 800 points from the PCA embedding using a density-based method informed by the fuzzy topological representation used internally by UMAP. Finally, a distance matrix is produced from this reduced point cloud, which can be passed to \texttt{ripser}. Further details of this procedure can be found in \cite{gardner2022toroidal}. The output of \texttt{ripser} can easily be plotted as a persistence diagram, as in figure \ref{fig:torus}(c). We note that for purposes of visualisation, it was necessary to slightly shift one of the points in persistence diagram corresponding to on of the 1-dimensional holes in order to visually distinguish it from the other, as their birth and death times were close to identical. We also note that \cite{gardner2022toroidal} reported bar-code diagrams, rather than persistence diagrams, but these convey the same information; we show a persistence diagram to connect more directly with section \ref{sec:tda_consistency}.

To produce plots (b) and (d) in figure \ref{fig:torus}, we applied the cohomological decoding procedure introduced by \citeapp{de2009persistent} and implemented in \cite{gardner_code}.
This method produces two circular coordinates, each corresponding to one of the two 1-dimensional holes identified in the persistent homology analysis. These coordinates, referred to as \textit{toroidal coordinates} are plotted in \cite{gardner2022toroidal} as functions of physical space to reveal the underlying periodic structure in the data. \citet{gardner2022toroidal} show that the periodic structure can be represented by two vectors, $\mathbf{r}_1$ and $\mathbf{r}_2$, that together define a rhombus that, when tessellated, captures the repeated pattern in the toroidal coordinates. In (d), we simply plot the tessellated rhombus atop the physical locations (coloured by the first PC). In (b), the points inside each of the rhombi are identified and then superimposed onto the central rhombus. To help the reader visualise how this 2D rhombus relates to a 3D torus, we have added markings in red to illustrate that a torus is formed when opposite edges of the rhombus are ``glued" together. 

\subsection{Our contribution}

In our experiments that build upon \citet{gardner2022toroidal}'s analysis, we work with the normalised data vectors $\tilde{\mathbf{Y}}_i := \mathbf{Y}_i / \|\mathbf{Y}_i\|, i \in [n]$, where the $n=15,000$ points are the same as those in our contextual experiments. From a theoretical perspective, this normalisation acts as a natural rescaling of the data to be applied by default as in the second part of proposition \ref{prop:rand_func};  the first part of proposition  \ref{prop:rand_func} indicates that an alternative scaling, $\Y_i/p^{1/2}$, would make sense if $p^{-1}\max_{i\in[n]} (\mathrm{tr}[\SS(z_i)] +\|\boldsymbol{\mu}(z_i)\|^2)+\sigma^2 \in O(1)$, but in practice we may not be sure that condition is satisfied.

As discussed in section \ref{sec:isometry}, we use a $k$-nearest neighbour graph (in our case $k=10$), with edge weights $W_{ij} = \|\tilde{\mathbf{Y}}_i - \tilde{\mathbf{Y}}_j \|$, to approximate shortest path lengths in the point cloud $\*Y_n := \{\tilde{\mathbf{Y}}_1, \ldots, \tilde{\mathbf{Y}}_n \}$. The shortest paths between points, denoted $\hat{L}(\mathbf{Y}_i, \mathbf{Y}_j)$, are calculated using Dijkstra's algorithm \citeapp{dijkstra1959note} on the constructed graph. To aid visualisation of the pattern in the data over physical space, as illustrated in figure \ref{fig:path_analysis}(a), we smooth the calculated on-manifold path lengths as a weighted combination of the $k=10$ nearest neighbours in physical space. Specifically, for a source position $\xi_i$ and some other physical position $\xi_j$, with neighbourhood $\mathcal{N}_j$, we weight each point $\xi_k \in \mathcal{N}_j$ by 
$$w_k = \frac{\max_{l \in \mathcal{N}_j}(\|\xi_l - \xi_i\| - \|\xi_k - \xi_i\|)}{\sum_{k \in \mathcal{N}_j} \max_{l \in \mathcal{N}_j}(\|\xi_l - \xi_i\| - \|\xi_k - \xi_i\|)},$$
and set the smoothed path length to the source to be
$\tilde{L}(\mathbf{Y}_i, \mathbf{Y}_j) := \sum_{k \in \mathcal{N}_j} w_k \cdot \hat{L}((\mathbf{Y}_i, \mathbf{Y}_k)$. These smoothed path lengths are also used in figures (b) and (c).

The shortest path lengths in the open field, i.e. under Model 1, are calculated similarly: using a $k=10$ nearest neighbour graph constructed from the physical positions $z_i = \xi_i$, followed by Dijkstra's algorithm. For Model 2, the procedure is the same, except the positions $z_i$ are now superimposed onto the central rhombus. Model 3 introduces additional complexity, as paths are allowed to teleport at the edges to the corresponding point on the opposite side. We handle this by `re-tessellating' the superimposed $z_i$ from Model 2 into the 8 surrounding rhombi, yielding a total of $135,000 = 9\times15,000$ points. Each $z_i$ now has 9 corresponding locations: one in each of the rhombi. Keeping the source point fixed in the central rhombus, we compute the shortest path lengths to each of the 9 destination points. These paths are illustrated in figure \ref{fig:re-tessellation}. The minimum of these 9 distances is taken to be the path length under Model 3; in this case, the shortest path is to the point in the rhombus directly above the central rhombus.

\begin{figure}
    \centering
    \includegraphics[width=0.6\linewidth]{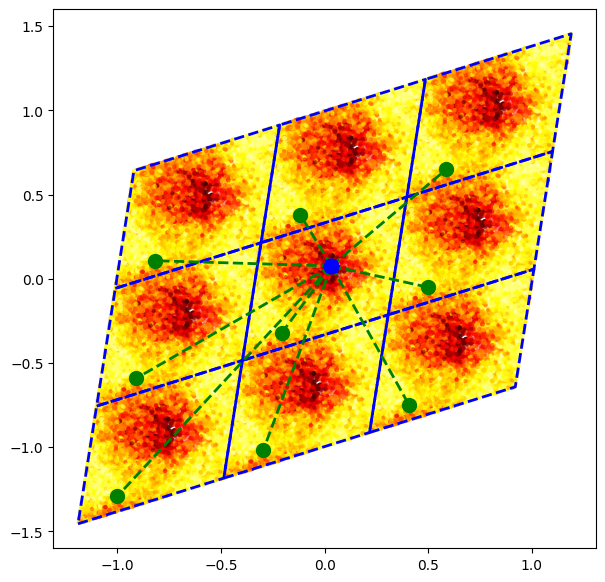}
    \caption{Shortest path lengths on $\*Z_n$ from Model 3 are calculated by first re-tessellating the $z_i$ from Model 2 into the 8 rhombi surrounding the central rhombus (illustrated above). Shortest paths to the same destination point in each of the rhombi are calculated and the minimum of the 9 distances is taken to be the distance under Model 3. Here, the shortest path is to the point in the rhombus directly above the central rhombus.}
    \label{fig:re-tessellation}
\end{figure}

On examination of the three distance-distance plots in figure \ref{fig:path_analysis}(c), we see distinct relationships between distances on the manifold $\*Y_n$ and distances under models 1-3. Under Model 1, the relationship appears approximately linear for points near the source (within a distance of 0.4 on $\*Z_n$). Beyond this threshold, however, distances on $\*Y_n$  begin to decrease as distances on $\*Z_n$ increase, with the trend reversing again around a distance of 0.8 on $\*Z_n$. This relationship can be intuitively explained by the toroidal structure in $\*Y_n$: as the rat moves in a straight line through physical space, its corresponding position on $\*Y_n$ wraps around the torus, eventually approaching its starting position from the opposite direction. 

Under Model 2, superimposing all points onto the central rhombus essentially maps the entire physical space onto a single, flat torus (a torus whose surface has zero Gaussian curvature everywhere). This accounts for the repeating spacial pattern observed in the data. However, flattening the torus into a 2D shape introduces two `cuts' - effectively ``slicing'' the torus - which restricts the available paths. As a result, some shortest paths on the torus become inaccessible after slicing, forcing detours that increase the measured distance. This constraint explains the deviation from a linear trend for distances greater than around 0.4 on $\mathcal{Z}_n$. Model 3 addresses this limitation by employing our `re-tessellation' procedure, which allows paths to traverse the full toroidal structure. While a small cluster of points still falls below the linear trend, we suspect this is due to some error in estimating the vectors that define the rhomboidal flat torus. 

Finally, note that in all three plots we have a positive intercept. As discussed in section \ref{sec:isometry}, assuming our random function model for the data, this offset is due to the positive noise term $\sigma$ and its appearance in equation \eqref{eq:distance}, recalled here:
\begin{equation*}
\frac{1}{p}\|\Y_i-\Y_j\|^2\approx \frac{1}{p}\|\phi(z_i)-\phi(z_j)\|^2 +2\sigma^2.
\end{equation*}
The moving averages, plotted in red, have been computed using a window size of $0.01 n = 150$, with the shaded region showing $\pm 1$ standard deviation.
\begin{figure}[h!]
    \centering
    \includegraphics[width=1\linewidth]{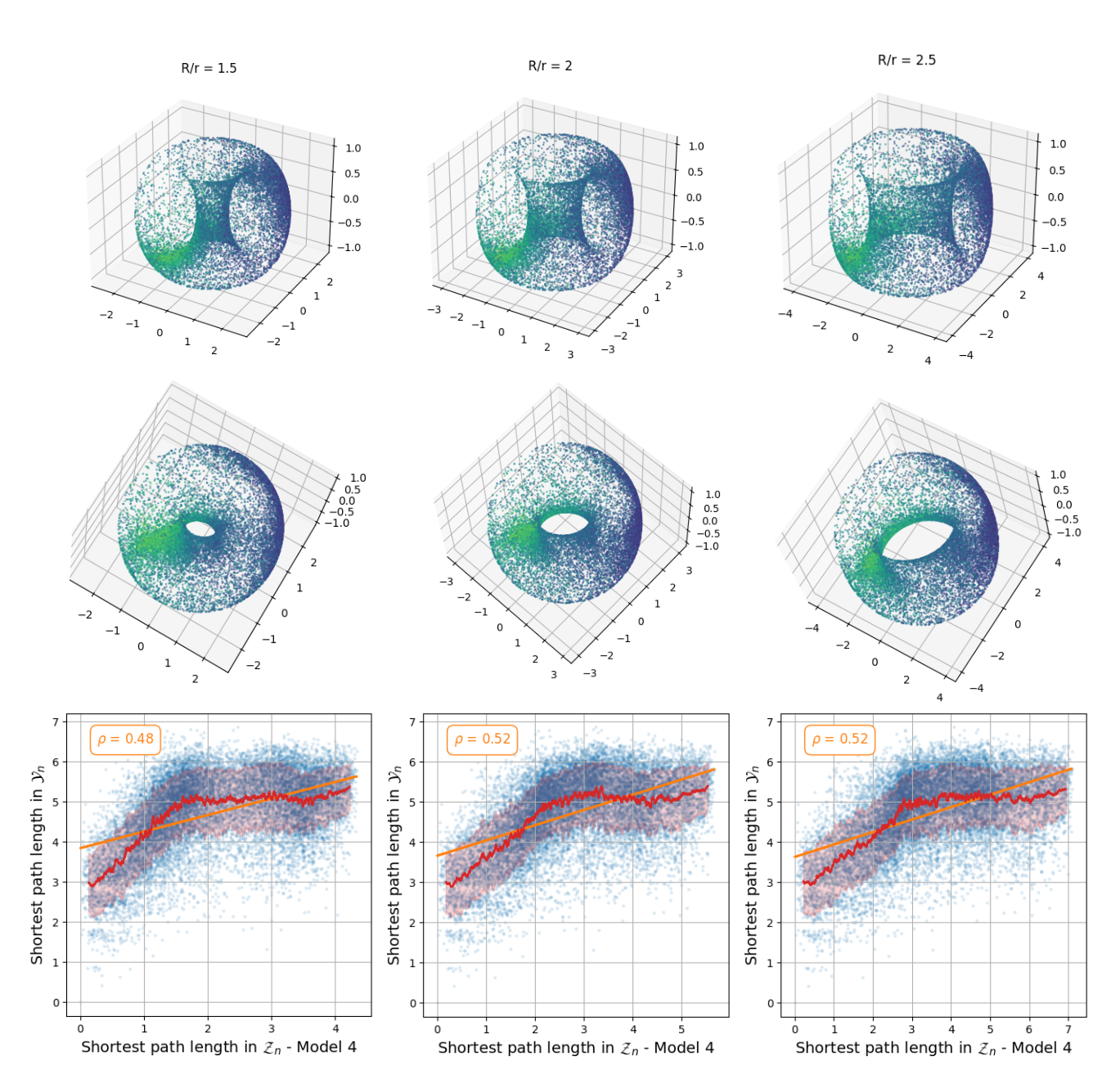}
    \caption{The top two rows show $z_i$ under Model 4 for three different ratios of torus radii $R/r \in \{1.5, 2, 2.5\}$ (left to right). The bottom row shows distance-distance plots comparing the shortest paths on $\*Y_n$ and $\*Z_n$ for each torus. Unlike Model 3, these embeddings exhibit a pronounced deviation from linearity, especially at larger distances. }
    \label{fig:model_4}
\end{figure}

\section{Additional experiments}

Building on the three models discussed in section \ref{sec:neuro}, we introduce a fourth: in `Model 4', the $z_i$ are points on a torus embedded in $\mathbb{R}^3$. We construct this model by taking the decoded toroidal coordinates from \cite{gardner2022toroidal} and mapping them onto a standard 3D torus. The geometry of a 3D torus is not only governed by its circular coordinates, but also by the ratio of the two radii: the larger radius $R$, which defines the circular path around around the central cavity, and the smaller radius $r$, which defines the cross-section of the tube. 

The top two rows of figure \ref{fig:model_4} show the $z_i$ under Model 4 for three different values of the ratio $R/r \in \{1.5, 2, 2.5\}$. The bottom row plots the shortest path lengths on $\*Y_n$  against the shortest path lengths on $\*Z_n$, which were calculated in the same way as for Models 1-3. We see that the linear fit under Model 4 is much worse than Model 3 in the main text.

The fact that there exists no $C^2$ (twice continuously differentiable) isometric embedding of a flat torus into $\mathbb{R}^3$ (proved by \citeapp{hartman1959spherical}, for example) means that shortest paths (and hence shortest path-lengths) in $\*Z$ under Model 3 are not equal to those under Model 4. Assuming that $\*Z$ from Model 3, i.e. the rhomboidal flat torus, is indeed isometric to $\*M$, the non-isometry between $\*Z$ under Models 3 and 4 explains
the non-isometric relationship between distances on $\*M$ and $\*Z$ from Model 4, observed in the bottom row of figure \ref{fig:model_4}.
Intuitively, while distances may be preserved on the initial ``roll up" of the rhombus into a tube, the final step — joining the tube's ends to form a torus — must involve stretching/compressing, depending on the radii of the torus. Suppose that the tube is only stretched (and not compressed) to form the torus; this distortion would inflate distances on $\*Z$ from Model 4 (compared to Model 3), providing an explanation for the observed deviation from linearity in the bottom row plots of figure \ref{fig:model_4}.

\bibliographystyleapp{plainnat}  
\bibliographyapp{refs_appendix}

\end{document}